\DeclareMathOperator{\Tr}{Tr}
\DeclareMathOperator{\Span}{span}
\DeclareMathOperator{\rank}{rank}
\DeclareMathOperator{\dis}{d}
\newcommand{\R}{\mathbb{R}}
\newtheorem{thm}{Theorem}
\newtheorem*{thm*}{Theorem}
\newtheorem{lemma}{Lemma}
\newtheorem*{lemma*}{Lemma}
\newcolumntype{C}[1]{>{\centering\let\newline\\\arraybackslash\hspace{0pt}}m{#1}}
\title{Stop memorizing: A data-dependent regularization framework for intrinsic pattern learning}
\author{
  Wei Zhu \\
  Department of Mathematics\\
  Duke University\\
  \texttt{zhu@math.duke.edu} \\
  \And
  Qiang Qiu \\
  Department of Electrical Engineering\\
  Duke University\\
  \texttt{qiang.qiu@duke.edu}\\
  \AND
  Bao Wang\\
  Department of Mathematics \\
  University of California, Los Angeles\\
  \texttt{wangbao@math.ucla.edu} \\
  \And
  Jianfeng Lu \\
  Department of Mathematics\\
  Department of Physics and Department of Chemistry\\
  Duke University\\
  \texttt{jianfeng@math.duke.edu} \\
  \And
  Guillermo Sapiro \\
  Department of Electrical Engineering\\
  Duke University\\
  \texttt{guillermo.sapiro@duke.edu} \\
  \And
  Ingrid Daubechies \\
  Department of Mathematics\\
  Duke University\\
  \texttt{ingrid@math.duke.edu} \\
}
\begin{document}

\maketitle

\begin{abstract}
Deep neural networks (DNNs) typically have enough capacity to fit random data by brute force even when conventional data-dependent regularizations focusing on the geometry of the features are imposed. We find out that the reason for this is the inconsistency between the enforced geometry and the standard softmax cross entropy loss. To resolve this, we propose a new framework for data-dependent DNN regularization, the Geometrically-Regularized-Self-Validating neural Networks (GRSVNet). During training, the geometry enforced on one batch of features is simultaneously validated on a separate batch using a validation loss consistent with the geometry. We study  a particular case of GRSVNet, the Orthogonal-Low-rank Embedding (OLE)-GRSVNet, which is capable of producing highly discriminative features residing in orthogonal low-rank subspaces. Numerical experiments show that OLE-GRSVNet outperforms DNNs with conventional regularization when trained on real data. More importantly, unlike conventional DNNs, OLE-GRSVNet refuses to memorize random data or random labels, suggesting it only learns intrinsic patterns by reducing the memorizing capacity of the baseline DNN.

\end{abstract}

\section{Introduction}
\label{sec:intro}

It remains an open question why DNNs,  typically with far more model parameters than training samples, can achieve such small generalization error. Previous work used various complexity measures from statistical learning theory, such as VC dimension \cite{vapnik1998}, Radamacher complexity \cite{bartlett2002}, and uniform stability \cite{bousquet2002, poggio2004}, to provide an upper bound for the generalization error, suggesting that the effective capacity of DNNs, possibly with some regularization techniques, is usually limited.

However, the experiments by Zhang et al. \cite{zhang2017} showed that, even with data-independent regularization,  DNNs can perfectly fit the training data when the true labels are replaced by random labels, or when the training data are replaced by Gaussian noise. This suggests that DNNs with data-independent regularization have enough capacity to ``memorize'' the training data. This poses an interesting question for network regularization design: is there a way for DNNs to refuse to (over)fit training samples with random labels, while exhibiting better generalization power than conventional DNNs when trained with true labels? Such networks are very important because they will extract only intrinsic patterns from the training data instead of memorizing miscellaneous details.

One would expect that data-dependent regularizations should be a better choice for reducing the memorizing capacity of DNNs. Such regularizations are typically enforced by penalizing the standard softmax cross entropy loss with an extra \textit{geometric loss} which regularizes the feature geometry \cite{ole, ldmnet, centerloss}. However, regularizing DNNs with an extra geometric loss has two disadvantages: First, the output of the softmax layer, usually viewed as a probability distribution, is typically inconsistent with the feature geometry enforced by the geometric loss. Therefore, the geometric loss typically has a small weight to avoid jeopardizing the minimization of the softmax loss. Second, we find that DNNs with such regularization can still perfectly (over)fit random training samples or random labels. The reason is that  the geometric loss (because of its small weight) is ignored and only the softmax loss is minimized.

This suggests that simply penalizing the softmax loss with a geometric loss is not sufficient to regularize DNNs. Instead, the softmax loss should be replaced by a \textit{validation loss} that is consistent with the enforced geometry. More specifically, every training batch $B$ is split into two sub-batches, the geometry batch $B^g$ and the validation batch $B^v$. The geometric loss $l_g$ is imposed on the features of $B^g$ for them to exhibit a desired geometric structure. A semi-supervised learning algorithm based on the proposed feature geometry is then used to generate a predicted label distribution for the validation batch, which combined with the true labels defines a validation loss on $B^v$. The total loss on the training batch $B$ is then defined as the weighted sum $l = l_g+\lambda l_v$. Because the predicted label distribution on $B^v$ is based on the enforced  geometry, the geometric loss $l_g$ can no longer be neglected. Therefore,  $l_g$ and $l_v$ will be minimized simultaneously, i.e., the geometry is correctly enforced (small $l_g$) and it can be used to predict validation samples (small $l_v$). We call such DNNs Geometrically-Regularized-Self-Validating neural Networks (GRSVNets). See Figure~\ref{fig:architecture} for a visual illustration of the network architecture.

GRSVNet is a general architecture because every consistent geometry/validation pair can fit into this framework as long as the loss functions are differentiable. In this paper, we focus on a particular type of GRSVNet, the Orthogonal-Low-rank-Embedding-GRSVNet (OLE-GRSVNet). More specifically, we impose the OLE loss \cite{ole_shallow} on the geometry batch  to produce features residing in orthogonal subspaces, and we use the principal angles between the validation features and  those subspaces to define a predicted label distribution on the validation batch. We prove that the loss function obtains its minimum if and only if the subspaces of different classes spanned by the features in the geometry batch are orthogonal, and the features in the validation batch reside perfectly in the subspaces corresponding to their labels (see Figure~\ref{fig:feature_ole_grsvnet}).  We show in our experiments that OLE-GRSVNet has better generalization performance when trained on real data, but it refuses to memorize the training samples when given random training data or random labels, which suggests that OLE-GRSVNet effectively learns intrinsic patterns.

Our contributions can be summarized as follows:
\begin{itemize}
\item We proposed a general framework, GRSVNet,  to effectively impose data-dependent DNN regularization. The core idea is the self-validation of the enforced geometry with a consistent validation loss on a separate batch of features.
\item We study a particular case of GRSVNet, OLE-GRSVNet, that can produce highly discriminative features: samples from the same class belong to a low-rank subspace, and the subspaces for different classes are orthogonal.
\item OLE-GRSVNet achieves better generalization performance  when compared to DNNs with conventional regularizers. And more importantly, unlike conventional DNNs, OLE-GRSVNet refuses to fit the training data (i.e., with a training error close to random guess) when the training data or the training labels are randomly generated. This implies that OLE-GRSVNet never memorizes the training samples, only learns intrinsic patterns.
\end{itemize}

\begin{figure*}
    \centering
    \begin{subfigure}[t]{0.3\textwidth}
        \includegraphics[width=\textwidth]{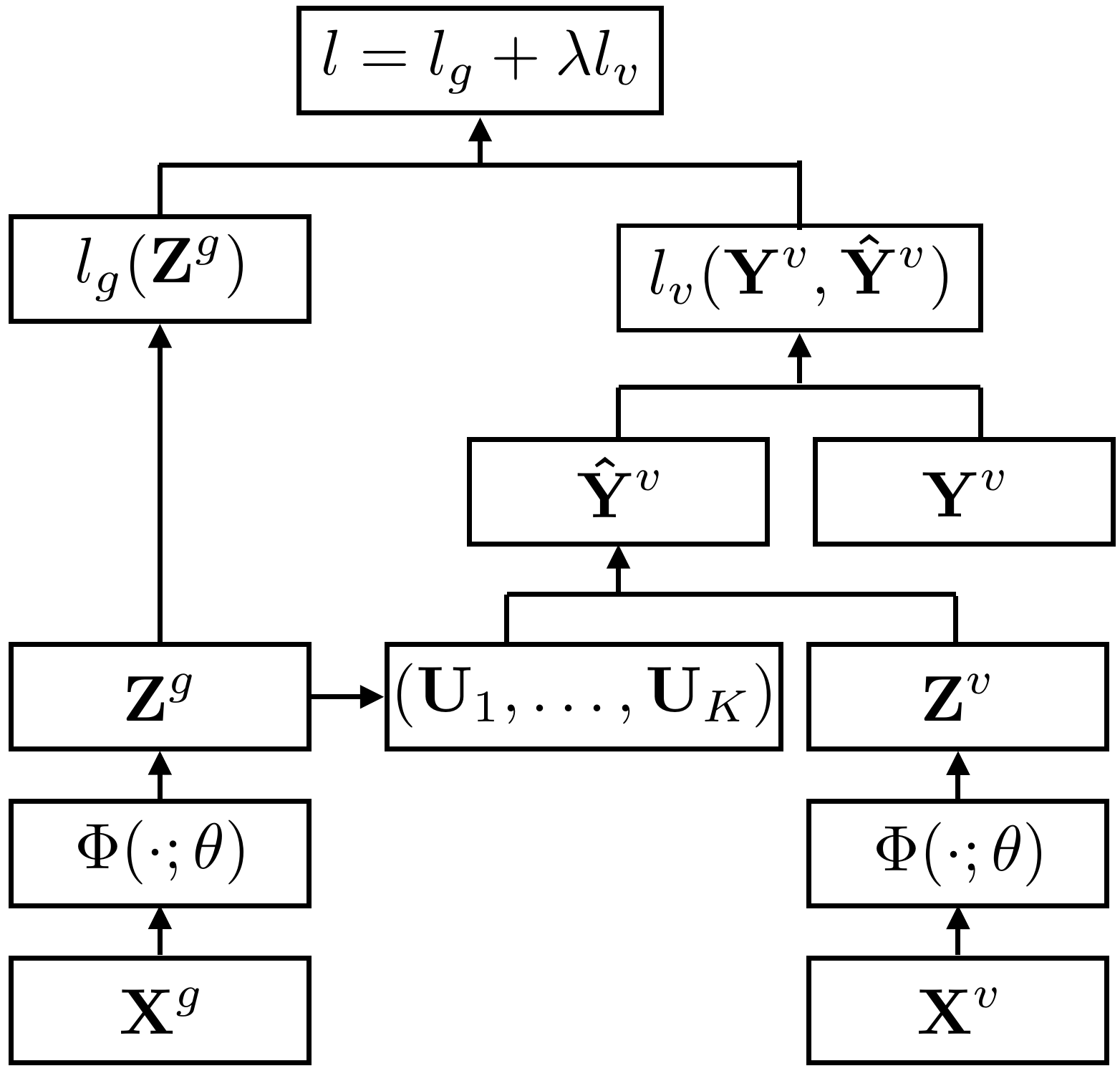}
        \caption{GRSVNet architecture}
        \label{fig:architecture}
    \end{subfigure}
     ~
    \begin{subfigure}[t]{0.3\textwidth}
        \includegraphics[width=\textwidth]{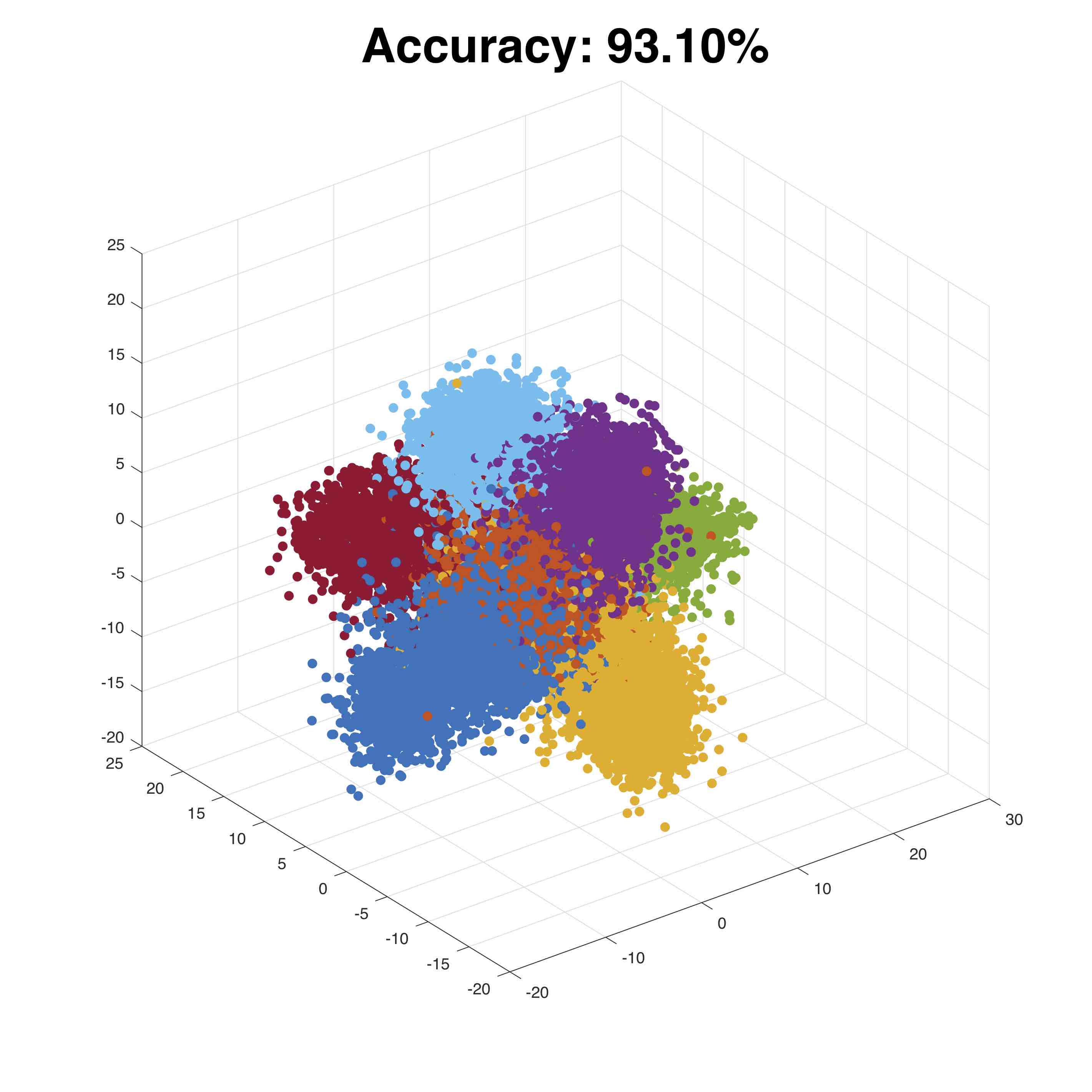}
        \caption{Softmax}
        \label{fig:feature_softmax}
    \end{subfigure}
     ~
    \begin{subfigure}[t]{0.3\textwidth}
        \includegraphics[width=\textwidth]{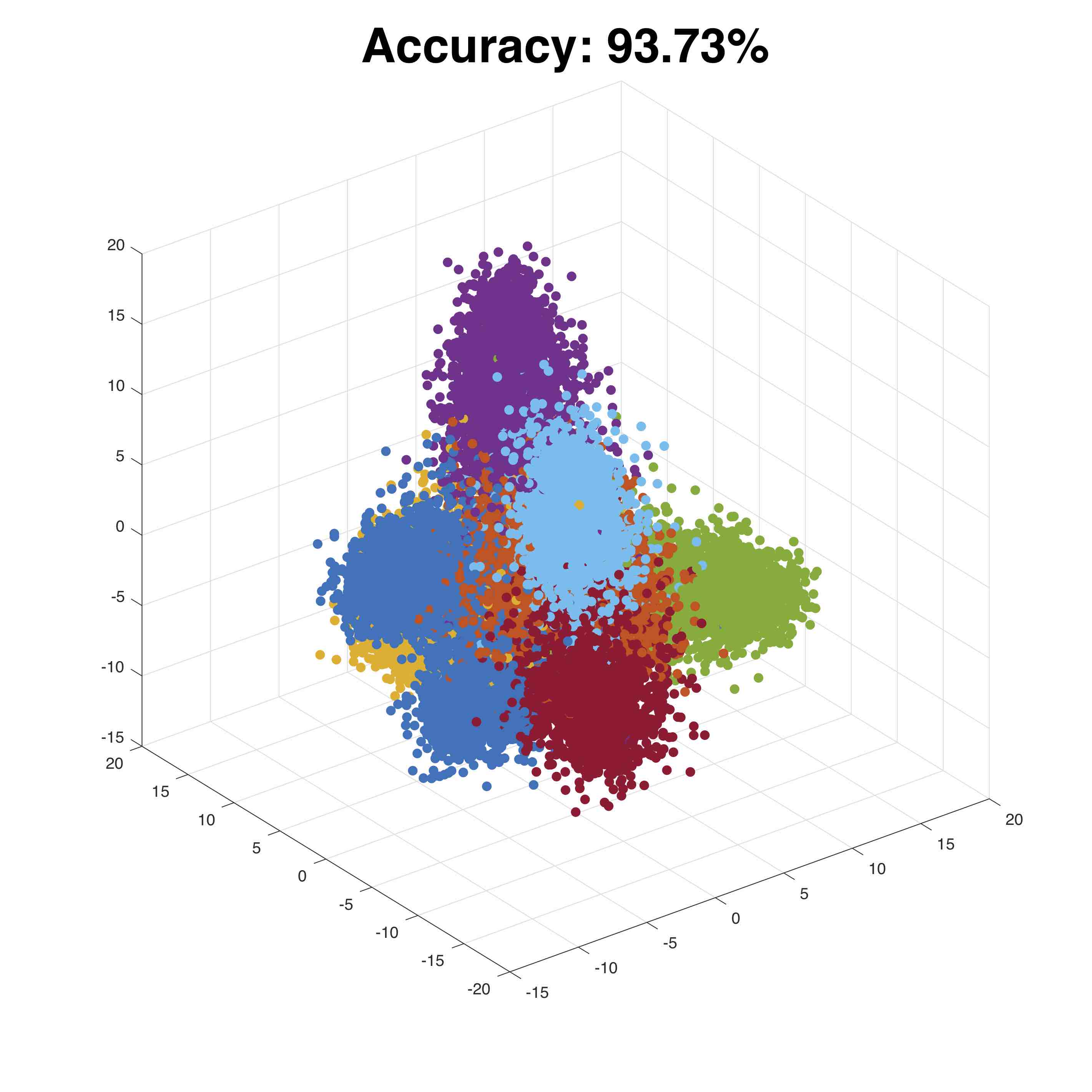}
        \caption{Softmax+weight decay}
        \label{fig:feature_softmax_wd}
    \end{subfigure}
     ~
    \begin{subfigure}[t]{0.3\textwidth}
        \includegraphics[width=\textwidth]{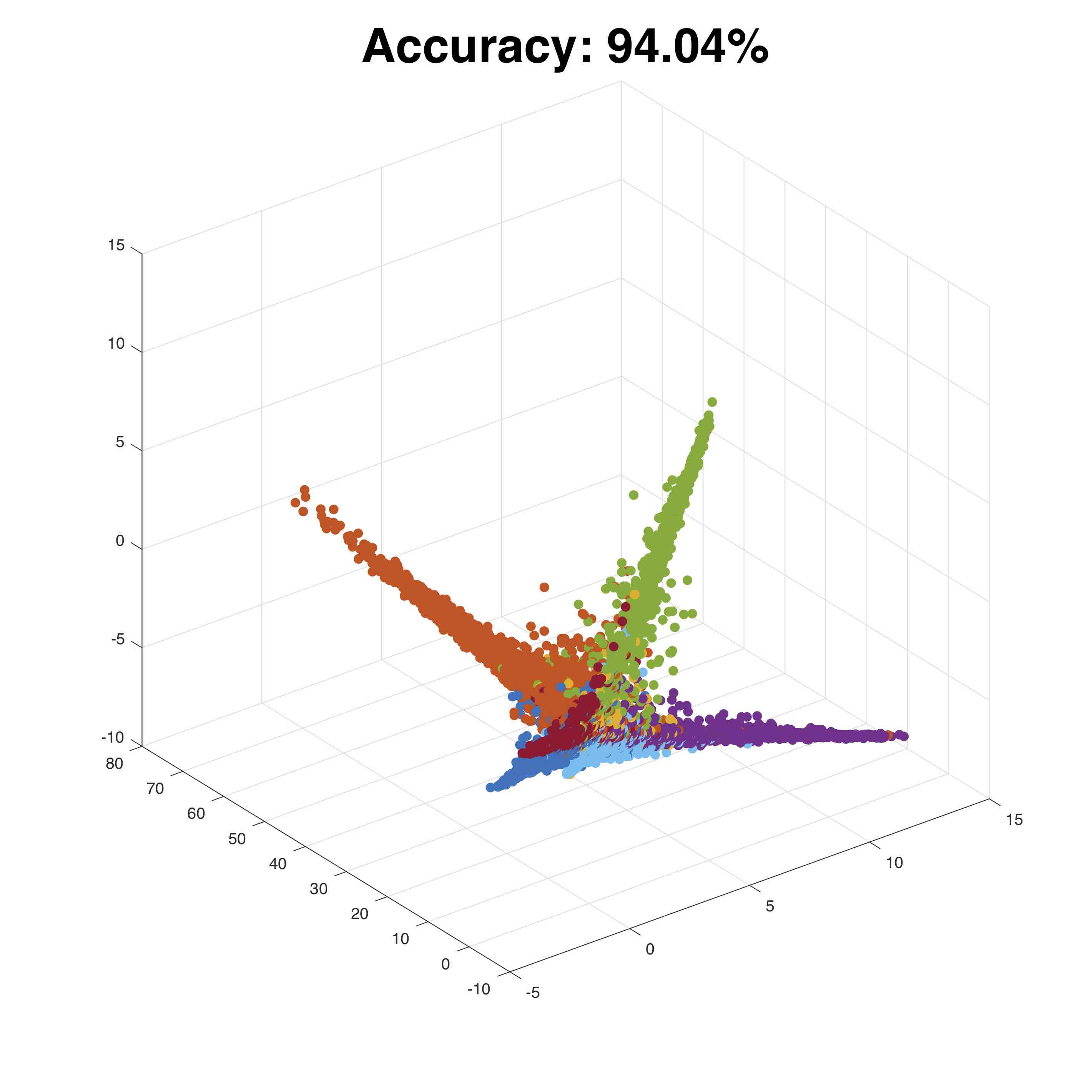}
        \caption{Softmax+OLE }
        \label{fig:feature_softmax_ole}
    \end{subfigure}
    ~
    \begin{subfigure}[t]{0.3\textwidth}
        \includegraphics[width=\textwidth]{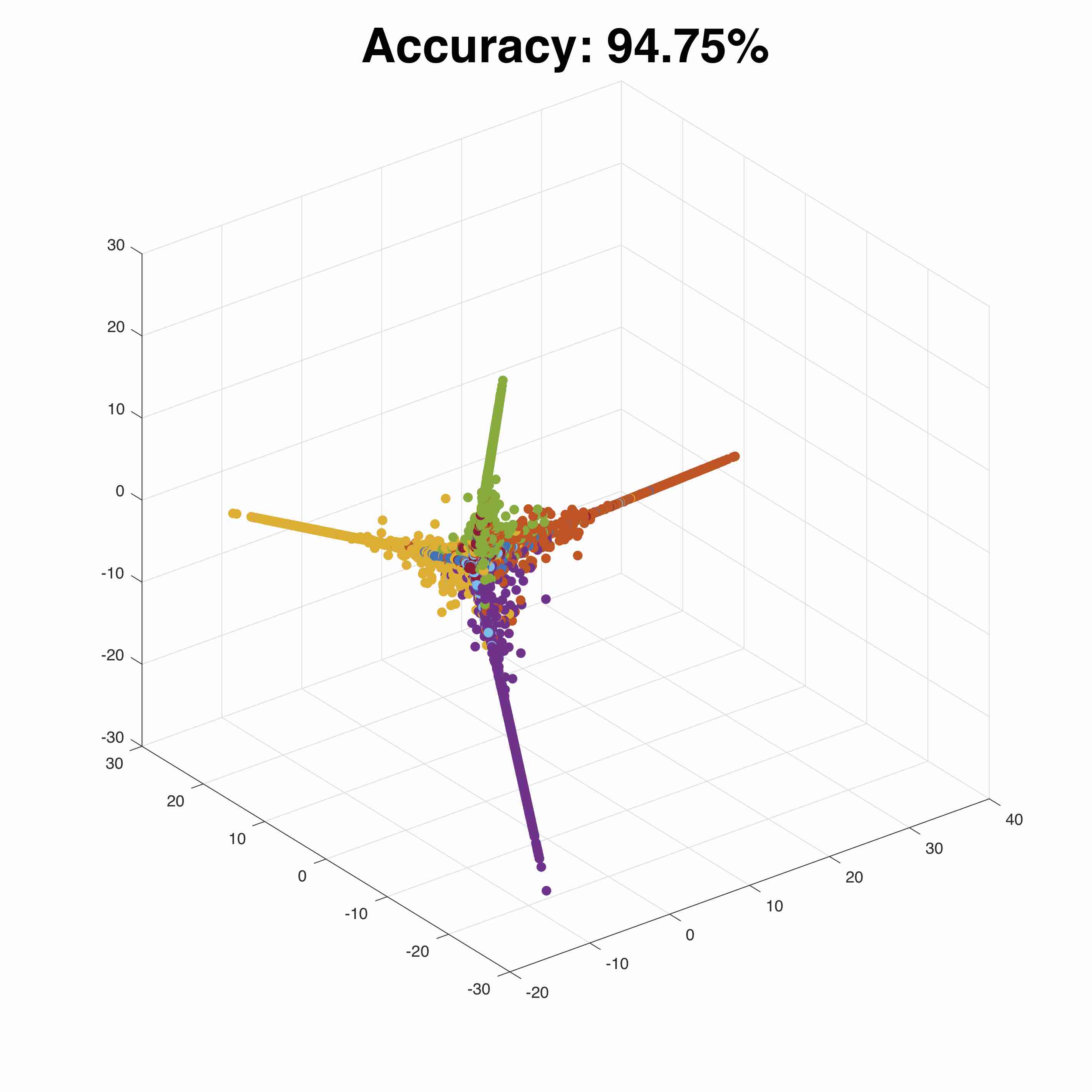}
        \caption{OLE-GRSVNet}
        \label{fig:feature_ole_grsvnet}
    \end{subfigure}
     ~
    \begin{subfigure}[t]{0.3\textwidth}
        \includegraphics[width=\textwidth]{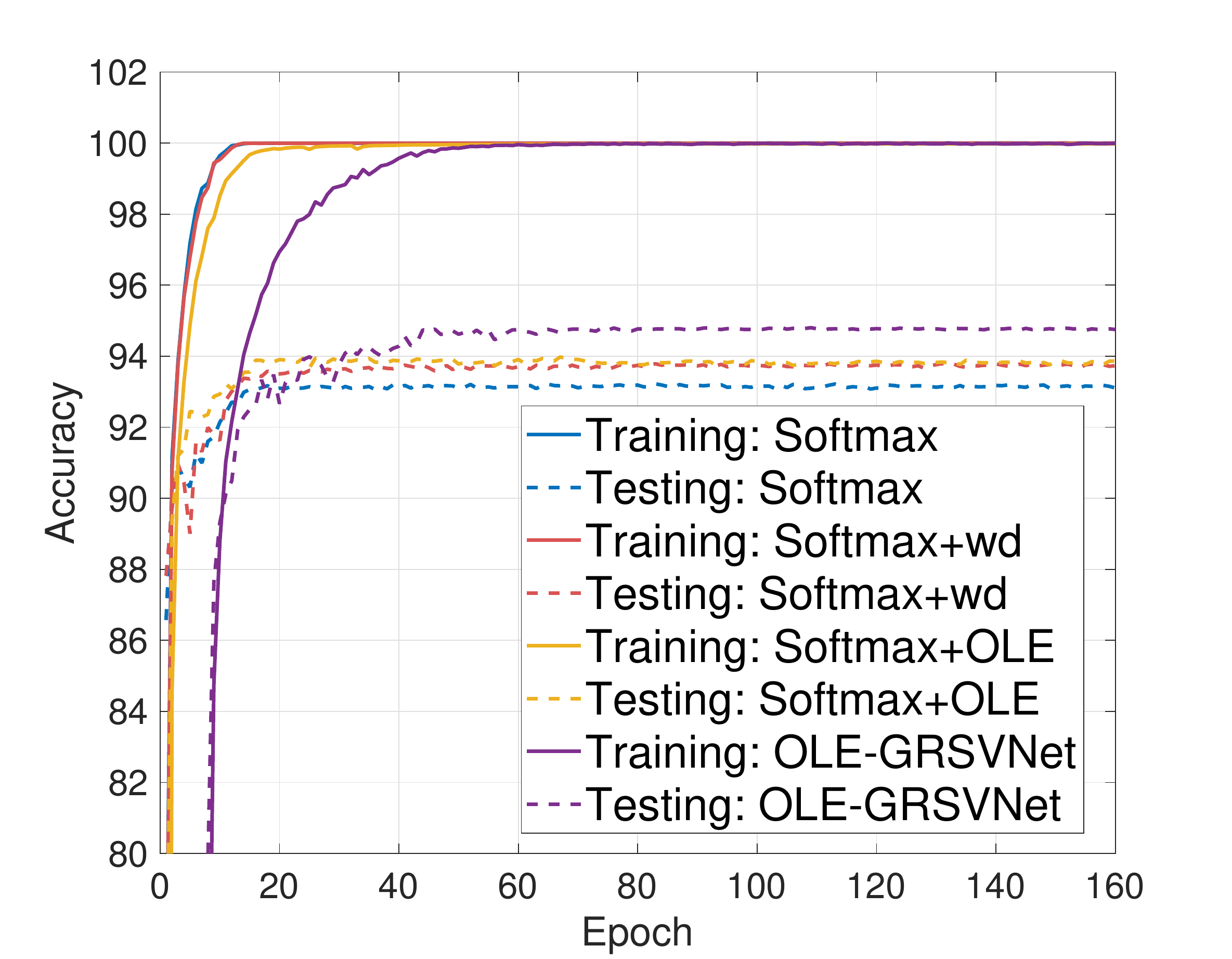}
        \caption{Learning curves}
        \label{fig:svhn_truelabel_lc}
    \end{subfigure}
    \caption{GRSVNet architecture and the results of different networks with the same VGG-11 baseline achitecture on the SVHN dataset with real data and real labels. (a) GRSVNet architecture (better understood in its special case OLE-GRSVNet detailed in Section~\ref{sec:model}). (b)-(e) Features of the test data learned by different networks visualized in 3D using PCA. Note that for OLE-GRSVNet, only four classes (out of ten) have nonzero 3D embedding (Theorem~\ref{thm:global-min}). (f) Training/testing accuracy.}\label{fig:architecture+feature}
\end{figure*}

\section{Related Work}

Many data-dependent regularizations focusing on feature geometry have been proposed for deep learning \cite{ole, ldmnet, centerloss}. The center loss \cite{centerloss} produces compact clusters by minimizing the Euclidean distance between features and their class centers. LDMNet \cite{ldmnet} extracts features sampling a collection of low dimensional manifolds. The OLE loss \cite{ole, ole_shallow} increases inter-class separation and intra-class similarity by embedding inputs into orthogonal low-rank subspaces. However, as mentioned in Section~\ref{sec:intro}, these regularizations are imposed by adding the geometric loss to the softmax loss, which, when viewed as a probability distribution, is typically not consistent with the desired geometry. Our proposed GRSVNet instead uses a validation loss based on the regularized geometry so that the predicted label distribution has a meaningful geometric interpretation.

The way in which GRSVNets impose geometric loss and validation loss on two separate batches of features extracted with two identical baseline DNNs bears a certain resemblance to the siamese network architecture \cite{chopra2005} used extensively in metric learning \cite{cheng2016,hadsell2006, hu2014, schroff2015, sun2014}. The difference is, unlike contrastive loss \cite{hadsell2006} and triplet loss \cite{schroff2015} in metric learning, the feature geometry is explicitly regularized in GRSVNets, and a representation of the geometry, e.g., basis of the low-rank subspace, can be later used directly for the classification of test data.

Our work is also related to two recent papers \cite{zhang2017, arpit2017} addressing the memorization of DNNs. Zhang et al. \cite{zhang2017} empirically showed that conventional DNNs, even with data-independent regularization, are fully capable of memorizing random labels or random data. Arpit et al. \cite{arpit2017} argued that DNNs trained with stochastic gradient descent (SGD) tend to fit patterns first before memorizing miscellaneous details, suggesting that memorization of DNNs depends also on the data itself, and SGD with early stopping is a valid strategy in conventional DNN training. We demonstrate in our paper that when data-dependent regularization is imposed in accordance with the validation, GRSVNets will \textbf{never} memorize random labels or random data, and only extracts intrinsic patterns. An explanation of this phenomenon is provided in Section~\ref{sec:toy}.

\section{GRSVNet and Its Special Case: OLE-GRSVNet }
\label{sec:model}

As pointed out in Section~\ref{sec:intro}, the core idea of GRSVNet is to self-validate the geometry using a consistent validation loss. To contextualize this idea, we study a particular case, OLE-GRSVNet, where the regularized feature geometry is orthogonal low-rank subspaces, and the validation loss is defined by the principal angles between the validation features and the subspaces.

\subsection{OLE loss}
The OLE loss was originally proposed in \cite{ole_shallow}. Consider a $K$-way classification problem. Let $\mathbf{X} = [\bm{x}_1, \ldots, \bm{x}_N] \in \R^{d\times N}$ be a collection of data points $\{\bm{x}_i\}_{i=1}^N \subset \R^d$. Let $\mathbf{X}_c$ denote the submatrix of $\mathbf{X}$ formed by inputs of the $c$-th class. The authors in \cite{ole_shallow} proposed to learn a linear transformation $\mathbf{T}:\R^d\to\R^d$ that maps data from the same class $\mathbf{X}_c$ into a low-rank subspace, while mapping the entire data $\mathbf{X}$ into a high-rank linear space. This is achieved by solving:
\begin{align}
  \label{eq:ole-loss-shallow}
  \min_{\mathbf{T}:\R^d\to\R^d}\sum_{c=1}^K\|\mathbf{TX}_c\|_* - \|\mathbf{TX}\|_*, \quad \text{s.t. } \|\mathbf{T}\|_2 = 1,
\end{align}
where $\|\cdot\|_*$ is the matrix nuclear norm, which is a convex lower bound of the rank function on the unit ball in the operator norm \cite{recht2010}. The norm constraint $\|\mathbf{T}\|_2=1$ is imposed to avoid the trivial solution $\mathbf{T}=\mathbf{0}$. It is proved in \cite{ole_shallow} that the OLE loss \eqref{eq:ole-loss-shallow} is always nonnegative, and the global optimum value $0$ is obtained if $\mathbf{TX}_c\bot\mathbf{TX}_{c'}, \forall c\not=c'$.

Lezama et al. \cite{ole} later used OLE loss as a data-dependent regularization for deep learning. Given a baseline DNN that maps a batch of inputs $\mathbf{X}$ into the features $\mathbf{Z} = \Phi(\mathbf{X};\theta)$, the OLE loss on $\mathbf{Z}$ is
\begin{align}
  \label{eq:ole-loss-deep}
  l_g(\mathbf{Z}) = \sum_{c=1}^K \|\mathbf{Z}_c\|_* - \|\mathbf{Z}\|_* 
  = \sum_{c=1}^K \|\Phi(\mathbf{X}_c;\theta)\|_* - \|\Phi(\mathbf{X};\theta)\|_*.
\end{align}
The OLE loss is later combined with the standard softmax loss for training, and we will henceforth call such network  ``softmax+OLE.'' Softmax+OLE significantly improves the generalization performance, but it suffers from two problems because of the inconsistency between the softmax loss and the OLE loss: First, the learned features no longer exhibit the desired geometry of orthogonal low-rank subspaces. Second, as will be shown in Section~\ref{sec:toy}, softmax+OLE is still capable of memorizing random data or random labels, i.e., it has not reduced the memorizing capacity of DNNs.

\subsection{OLE-GRSVNet}

We will now explain how to incorporate OLE loss into the GRSVNet framework. First, let us better understand the geometry enforced by the OLE loss by stating the following theorem.
\begin{thm}
\label{thm:lg}
  Let $\mathbf{Z} = [\mathbf{Z}_1 ,\ldots, \mathbf{Z}_c]$ be a horizontal concatenation of matrices $\{\mathbf{Z}_c\}_{c=1}^K$. The OLE loss $l_g(\mathbf{Z})$ defined in \eqref{eq:ole-loss-deep} is always nonnegative. Moreover, $l_g(\mathbf{Z}) = 0$ if and only if $\mathbf{Z}_c^* \mathbf{Z}_{c'} = \mathbf{0}, \forall c\not=c'$, i.e., the column spaces of $\mathbf{Z}_c$ and $\mathbf{Z}_{c'}$ are orthogonal.
\end{thm}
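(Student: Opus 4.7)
The plan is to reduce everything to the two-block case and induct on $K$. The key lemma is that for any matrices $\mathbf{A},\mathbf{B}$ with the same number of rows,
$$\|[\mathbf{A},\mathbf{B}]\|_* \leq \|\mathbf{A}\|_* + \|\mathbf{B}\|_*,$$
with equality if and only if $\mathbf{A}^*\mathbf{B} = \mathbf{0}$. Applying it to $\mathbf{Z} = [[\mathbf{Z}_1,\ldots,\mathbf{Z}_{K-1}],\mathbf{Z}_K]$ and unfolding the induction hypothesis on the left block then gives both $l_g(\mathbf{Z})\geq 0$ and the characterization of its zero set, since equality along the full inductive chain forces $\mathbf{Z}_c^*\mathbf{Z}_{c'}=\mathbf{0}$ for every $c\neq c'$.

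For the inequality itself, I would split $[\mathbf{A},\mathbf{B}] = [\mathbf{A},\mathbf{0}] + [\mathbf{0},\mathbf{B}]$ and invoke the triangle inequality of $\|\cdot\|_*$ (which is just the triangle inequality applied to its dual representation $\|\mathbf{M}\|_* = \max_{\|\mathbf{Y}\|_2\leq 1}\langle\mathbf{Y},\mathbf{M}\rangle$). Since appending zero columns leaves singular values unchanged, $\|[\mathbf{A},\mathbf{0}]\|_* = \|\mathbf{A}\|_*$ and similarly for $\mathbf{B}$, yielding the bound. The easy half of the equality case follows from noticing that when $\mathbf{A}^*\mathbf{B} = \mathbf{0}$, the Gram matrix
$$[\mathbf{A},\mathbf{B}]^*[\mathbf{A},\mathbf{B}] = \begin{pmatrix}\mathbf{A}^*\mathbf{A} & \mathbf{0}\\ \mathbf{0} & \mathbf{B}^*\mathbf{B}\end{pmatrix}$$
is block-diagonal, so the singular values of $[\mathbf{A},\mathbf{B}]$ are the union with multiplicity of those of $\mathbf{A}$ and $\mathbf{B}$, and the two nuclear norms add.

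The main obstacle is the converse: equality must force $\mathbf{A}^*\mathbf{B}=\mathbf{0}$. My plan is a pinching argument exploiting operator concavity of $t\mapsto\sqrt{t}$. Taking $\mathbf{J} = \mathrm{diag}(\mathbf{I},-\mathbf{I})$ with the obvious block sizes, one has $\tfrac{1}{2}(\mathbf{Z}^*\mathbf{Z} + \mathbf{J}\mathbf{Z}^*\mathbf{Z}\mathbf{J}) = D$, where $D$ is the block-diagonal part of $\mathbf{Z}^*\mathbf{Z}$; operator concavity then gives $\sqrt{D} \succeq \tfrac{1}{2}(\sqrt{\mathbf{Z}^*\mathbf{Z}} + \mathbf{J}\sqrt{\mathbf{Z}^*\mathbf{Z}}\mathbf{J})$, and taking traces recovers $\|\mathbf{A}\|_*+\|\mathbf{B}\|_* = \Tr\sqrt{D} \geq \|\mathbf{Z}\|_*$. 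The delicate step is checking that the equality case of this Jensen-type inequality, restricted to the nontrivial eigenspaces of $\mathbf{Z}^*\mathbf{Z}$ on which $\sqrt{\cdot}$ is strictly operator concave, is strong enough to force $\mathbf{Z}^*\mathbf{Z} = \mathbf{J}\mathbf{Z}^*\mathbf{Z}\mathbf{J}$, i.e., $\mathbf{A}^*\mathbf{B} = \mathbf{0}$. A fallback I would reach for if the strict-concavity bookkeeping becomes awkward is a direct dual-optimality argument: pick $\mathbf{Y} = \mathbf{U}\mathbf{V}^*$ from the SVD of $[\mathbf{A},\mathbf{B}]$, partition it blockwise as $[\mathbf{Y}_\mathbf{A},\mathbf{Y}_\mathbf{B}]$, and use the subdifferential characterization of the nuclear norm to read off column-space orthogonality of $\mathbf{A}$ and $\mathbf{B}$ from the simultaneously saturated inequalities $\langle\mathbf{Y}_\mathbf{A},\mathbf{A}\rangle = \|\mathbf{A}\|_*$, $\langle\mathbf{Y}_\mathbf{B},\mathbf{B}\rangle = \|\mathbf{B}\|_*$, and $\|[\mathbf{Y}_\mathbf{A},\mathbf{Y}_\mathbf{B}]\|_2\leq 1$.
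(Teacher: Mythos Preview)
Your overall architecture matches the paper exactly: reduce to $K=2$ and induct, prove the inequality by the split $[\mathbf{A},\mathbf{0}]+[\mathbf{0},\mathbf{B}]$ plus the triangle inequality for $\|\cdot\|_*$, and dispatch the easy direction of the equality case via the block-diagonal Gram matrix. The only genuine divergence is in the converse (equality $\Rightarrow \mathbf{A}^*\mathbf{B}=\mathbf{0}$).

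The paper does not use operator concavity or duality there. Instead it writes the positive square root $\bigl|[\mathbf{A},\mathbf{B}]\bigr|=\begin{pmatrix}\mathbf{E}&\mathbf{G}\\\mathbf{G}^*&\mathbf{F}\end{pmatrix}$, reads off $\mathbf{A}^*\mathbf{A}=\mathbf{E}^2+\mathbf{G}\mathbf{G}^*$, $\mathbf{B}^*\mathbf{B}=\mathbf{F}^2+\mathbf{G}^*\mathbf{G}$, $\mathbf{A}^*\mathbf{B}=\mathbf{E}\mathbf{G}+\mathbf{G}\mathbf{F}$, and then runs the chain
\[
\|\mathbf{A}\|_*+\|\mathbf{B}\|_*=\sum_i\bigl(\|\mathbf{E}\bm{a}_i\|^2+\|\mathbf{G}^*\bm{a}_i\|^2\bigr)^{1/2}+\cdots\ \ge\ \Tr\mathbf{E}+\Tr\mathbf{F}=\|[\mathbf{A},\mathbf{B}]\|_*
\]
over eigenbases $\{\bm{a}_i\},\{\bm{b}_i\}$ of $|\mathbf{A}|,|\mathbf{B}|$; equality forces every $\|\mathbf{G}^*\bm{a}_i\|=0$, hence $\mathbf{G}=\mathbf{0}$ and $\mathbf{A}^*\mathbf{B}=\mathbf{0}$. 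This is entirely elementary (no operator-monotone machinery, no subdifferentials).

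Your pinching route is correct and in fact tidier than you fear: once trace equality upgrades the operator inequality to $\sqrt{D}=\tfrac12(\sqrt{X}+\mathbf{J}\sqrt{X}\mathbf{J})$ with $X=\mathbf{Z}^*\mathbf{Z}$, squaring and using that $\mathbf{J}\sqrt{X}\mathbf{J}=\sqrt{\mathbf{J}X\mathbf{J}}$ gives $(\sqrt{X}-\mathbf{J}\sqrt{X}\mathbf{J})^2=0$, and a self-adjoint matrix whose square vanishes is zero; no restriction to nontrivial eigenspaces or strict-concavity bookkeeping is needed. The trade-off is that your argument invokes operator concavity of $t\mapsto\sqrt{t}$ as a black box, whereas the paper's proof stays at the level of Cauchy--Schwarz and scalar square roots; on the other hand, your approach generalizes immediately to any pinching (hence to $K>2$ blocks in one shot, without induction). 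Your dual/subdifferential fallback would require more work than you sketch to extract $\mathbf{A}^*\mathbf{B}=\mathbf{0}$ from the two saturated certificates, so I would stick with the pinching argument.
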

The proof of Theorem~\ref{thm:lg}, as well as those of the remaining theorems, is detailed in the appendix. Note that Theorem~\ref{thm:lg}, which ensures that the OLE loss is minimized if and \textbf{only} if features of different classes are orthogonal, is a much stronger result than that in \cite{ole_shallow}. We then need to define a validation loss $l_v$  that is consistent with the geometry enforced by $l_g$. A natural choice would be the principal angles between the validation features and the subspaces spanned by $\{\mathbf{Z}_c\}_{c=1}^K$. 

Now we detail the architecture for OLE-GRSVNet. Given a baseline DNN, we split every training batch $\mathbf{X} \in \R^{d\times |B|}$ into two sub-batches, the geometry batch $\mathbf{X}^g\in \R^{d\times |B_g|}$ and the validation batch $\mathbf{X}^v\in \R^{d\times |B_v|}$, which are mapped by the same baseline DNN into features  $\mathbf{Z}^g = \Phi(\mathbf{X}^g;\theta)$ and $\mathbf{Z}^v = \Phi(\mathbf{X}^v;\theta)$. The OLE loss $l_g(\mathbf{Z}^g)$ is imposed on the geometry batch to ensure $\Span(\mathbf{Z}^g_c)$ are orthogonal low-rank subspaces, where $\Span(\mathbf{Z}^g_c)$ is the column space of $\mathbf{Z}^g_c$. Let $\mathbf{Z}_c^g = \mathbf{U}_c\mathbf{\Sigma}_c\mathbf{V}_c^*$ be the (compact) singular value decomposition (SVD) of $\mathbf{Z}_c^g$, then the columns of $\mathbf{U}_c$ form an orthonormal basis of $\Span(\mathbf{Z}_c^g)$. For any feature $\bm{z} = \Phi(\bm{x};\theta) \in \mathbf{Z}^v$ in the validation batch, its projection onto the subspace  $\Span(\mathbf{Z}_c^g)$ is $\text{proj}_c(\bm{z}) = \mathbf{U}_c\mathbf{U}_c^*\bm{z}$. The cosine similarity between $\bm{z}$ and $\text{proj}_c(\bm{z})$ is then defined as the (unnormalized) probability of $\bm{x}$ belonging to class $c$, i.e.,
\begin{align}
  \label{eq:probability}
  \hat{y}_c=\mathbf{P}(\bm{x}\in c) \triangleq \left<\bm{z}, \frac{\text{proj}_c(\bm{z})}{\max\left(\|\text{proj}_c(\bm{z})\|,\varepsilon\right)}\right>\left/\sum_{c'=1}^K\left<\bm{z}, \frac{\text{proj}_{c'}(\bm{z})}{\max\left(\|\text{proj}_{c'}(\bm{z})\|, \varepsilon\right)}\right>\right.,
\end{align}
where a small $\varepsilon$ is chosen for numerical stability. The validation loss for $\bm{x}$ is then defined as the cross entropy between the predicted distribution $\hat{\bm{y}}=(\hat{y}_1, \ldots, \hat{y}_K)^T\in \R^K$ and the true label $y\in\{1, \ldots, K\}$. More specifically, let $\mathbf{Y}^v\in\R^{1\times |B_v|}$ and $\hat{\mathbf{Y}}^v\in\R^{K\times |B_v|}$ be the collection of true labels and predicted label distributions on the validation batch, then the validation loss is defined as
\begin{align}
  \label{eq:validation_loss}
  l_v(\mathbf{Y}^v, \hat{\mathbf{Y}}^v) = \frac{1}{|B_v|}\sum_{\bm{x}\in \mathbf{X}^v}H(\delta_y,\hat{\bm{y}}) = -\frac{1}{|B_v|}\sum_{\bm{x}\in \mathbf{X}^v} \log \hat{y}_y,
\end{align}
where $\delta_y$ is the Dirac distribution at label $y$, and $H(\cdot, \cdot)$ is the cross entropy between two distributions. The empirical loss $l$ on the training batch $\mathbf{X}$ is then defined as
\begin{align}
  \label{eq:total_loss}
  l(\mathbf{X}, \mathbf{Y}) =   l([\mathbf{X}^g,\mathbf{X}^v],[\mathbf{Y}^g,\mathbf{Y}^v]) = l_g(\mathbf{Z}^g) + \lambda l_v(\mathbf{Y}^v, \hat{\mathbf{Y}}^v).
\end{align}

See Figure~\ref{fig:architecture} for a visual illustration of the OLE-GRSVNet architecture. Because of the consistency between $l_g$ and $l_v$, we have the following theorem:
\begin{thm}
\label{thm:global-min}
For any $\lambda > 0$, and any geometry/validation splitting of $\mathbf{X} = [\mathbf{X}^g,\mathbf{X}^v]$ satisfying $\mathbf{X}^v$ contains at least one sample for each class, the empirical loss function defined in \eqref{eq:total_loss} is always nonnegative. $l(\mathbf{X}, \mathbf{Y}) = 0$ if and only if both of the following conditions hold true:
\begin{itemize}
\item The features of the geometry batch belonging to different classes are orthogonal, i.e., $\Span(\mathbf{Z}_c^g)\bot \Span(\mathbf{Z}_{c'}^g), \forall c\not=c'$ 
\item For every datum $\bm{x}\in \mathbf{X}_c^v$, i.e., $\bm{x}$ belongs to class $c$ in the validation batch, its feature $\bm{z} = \Phi(\bm{x};\theta)$ belongs to $\Span(\mathbf{Z}_c^g)$.
\end{itemize}
Moreover, if $l<\infty$, then $\rank(\Span(\mathbf{Z}_c^g)) \ge 1, \forall c$,  i.e., $\Phi(\cdot;\theta)$ does not trivially map data into $\mathbf{0}$.
\end{thm}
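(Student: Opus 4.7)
The plan is to decompose the loss in \eqref{eq:total_loss} into its two components and analyze each separately: first establish nonnegativity of $l_g$ and $l_v$, then characterize their zero sets, and finally extract the rank claim from the hypothesis $l<\infty$. The compact SVD $\mathbf{Z}_c^g = \mathbf{U}_c\mathbf{\Sigma}_c\mathbf{V}_c^*$ will be the main algebraic tool, since it yields the clean identities $\text{proj}_c(\bm{z}) = \mathbf{U}_c\mathbf{U}_c^*\bm{z}$ and $\|\text{proj}_c(\bm{z})\| = \|\mathbf{U}_c^*\bm{z}\|$, which will reduce the awkward cosine-similarity expression in \eqref{eq:probability} to something explicitly tractable.

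For nonnegativity, I would invoke Theorem~\ref{thm:lg} directly for $l_g$. For $l_v$, I would simplify the numerator in \eqref{eq:probability} using $\langle \bm{z},\mathbf{U}_c\mathbf{U}_c^*\bm{z}\rangle = \|\mathbf{U}_c^*\bm{z}\|^2$ to write each unnormalized score as $\|\mathbf{U}_c^*\bm{z}\|^2/\max(\|\mathbf{U}_c^*\bm{z}\|,\varepsilon) \geq 0$, so each $\hat{y}_c \in [0,1]$ and $-\log \hat{y}_y \geq 0$. Summing then gives $l \geq 0$.

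For the equality case, $l = 0$ forces both $l_g = 0$ and $l_v = 0$. Theorem~\ref{thm:lg} handles the first, yielding the orthogonality bullet. For $l_v = 0$, I would use that a cross entropy against a Dirac distribution vanishes iff the predicted probability on the true class equals $1$, which forces $\hat{y}_c = 0$ (hence $\|\mathbf{U}_c^*\bm{z}\| = 0$) for every $c \neq y$. Combined with the subspace orthogonality from the first bullet, this localizes $\bm{z}$ to $\Span(\mathbf{Z}_y^g)$; the converse direction follows by direct substitution, since orthogonality kills all off-target projections so that $\hat{\bm{y}}$ collapses to $\delta_y$. For the rank claim, finiteness of $l$ forces $\hat{y}_y > 0$ for every validation sample with label $y$, which requires $\|\mathbf{U}_y^*\bm{z}\| > 0$, impossible when $\Span(\mathbf{Z}_y^g) = \{\mathbf{0}\}$; the hypothesis that each class appears in $\mathbf{X}^v$ then extends this to all $c$.

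The main obstacle I anticipate is the step that concludes $\bm{z} \in \Span(\mathbf{Z}_y^g)$ in the equality analysis: vanishing $\|\mathbf{U}_c^*\bm{z}\|$ for $c \neq y$ alone only constrains $\bm{z}$ to the orthogonal complement of $\bigoplus_{c\neq y}\Span(\mathbf{Z}_c^g)$, so an additional structural argument is needed to rule out a component orthogonal to all class subspaces and thereby pin $\bm{z}$ down exactly. A secondary technical nuisance is the $\varepsilon$ floor in \eqref{eq:probability}: the degenerate regime where the floor is active should be checked separately to confirm that both the $[0,1]$ bound and the characterization of $\hat{y}_y = 1$ survive.
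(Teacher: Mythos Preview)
Your approach is essentially the paper's: decompose $l = l_g + \lambda l_v$, invoke Theorem~\ref{thm:lg} for $l_g$, read $l_v$ as an averaged cross entropy, and for the rank claim argue that a trivial $\Span(\mathbf{Z}_c^g)$ forces $\hat{y}_c = 0$ and hence $l = +\infty$ on any validation sample from class $c$.

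The obstacle you flag in the equality analysis is genuine, and the paper does not resolve it. The paper's proof simply asserts ``$l_v = 0$ if and only if $\hat{\bm{y}}(\bm{x}) = \delta_y$, i.e., for every $\bm{x}\in \mathbf{X}_c^v$, its feature $\bm{z}$ belongs to $\Span(\mathbf{Z}_c^g)$,'' with no justification of the ``i.e.'' Your reading is sharper: $\hat{\bm{y}} = \delta_y$ only forces $\text{proj}_{c'}(\bm{z}) = 0$ for all $c' \neq y$, which confines $\bm{z}$ to the orthogonal complement of $\bigoplus_{c'\neq y}\Span(\mathbf{Z}_{c'}^g)$ but does not by itself exclude a component orthogonal to \emph{every} class subspace. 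So either the ``only if'' direction of the second bullet is stated slightly too strongly, or there is an unstated ambient assumption (e.g., that the feature space is exhausted by $\bigoplus_c \Span(\mathbf{Z}_c^g)$); the paper's proof does not address this either way. Your secondary concern about the $\varepsilon$ floor is harmless, as your own simplification already shows: the unnormalized score $\|\mathbf{U}_c^*\bm{z}\|^2/\max(\|\mathbf{U}_c^*\bm{z}\|,\varepsilon)$ is nonnegative and vanishes exactly when $\text{proj}_c(\bm{z}) = 0$, regardless of whether the floor is active.
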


\textbf{Remark}: The requirement that $\lambda>0$ is crucial in Theorem~\ref{thm:global-min}, because otherwise the network can  map every input into $\mathbf{0}$ and achieve the minimum. This is validated in our numerical experiments.

After the training process has finished, we can then map the entire training data $\mathbf{X}^{\text{all}} = [\mathbf{X}_1^{\text{all}}, \ldots, \mathbf{X}_K^{\text{all}}]$ (or a random portion of $\mathbf{X}^{\text{all}}$) into their features $\mathbf{Z}^{\text{all}} = \Phi(\mathbf{X}^{\text{all}};\theta^*)$, where $\theta^*$ is the learned parameter. The low-rank subspace $\Span(\mathbf{Z}_c^{\text{all}})$ for class $c$ can be obtained via the SVD of $\mathbf{Z}_c^{\text{all}}$. The label of a test datum $\bm{x}$ is then determined by the principal angles between $\bm{z}=\Phi(\bm{x};\theta^*)$ and $\{\Span(\mathbf{Z}_c^{\text{all}})\}_{c=1}^K$.

\section{Two Toy Experiments}
\label{sec:toy}

\begin{figure*}
    \centering
    \begin{subfigure}[t]{0.43\textwidth}
        \includegraphics[width=\textwidth]{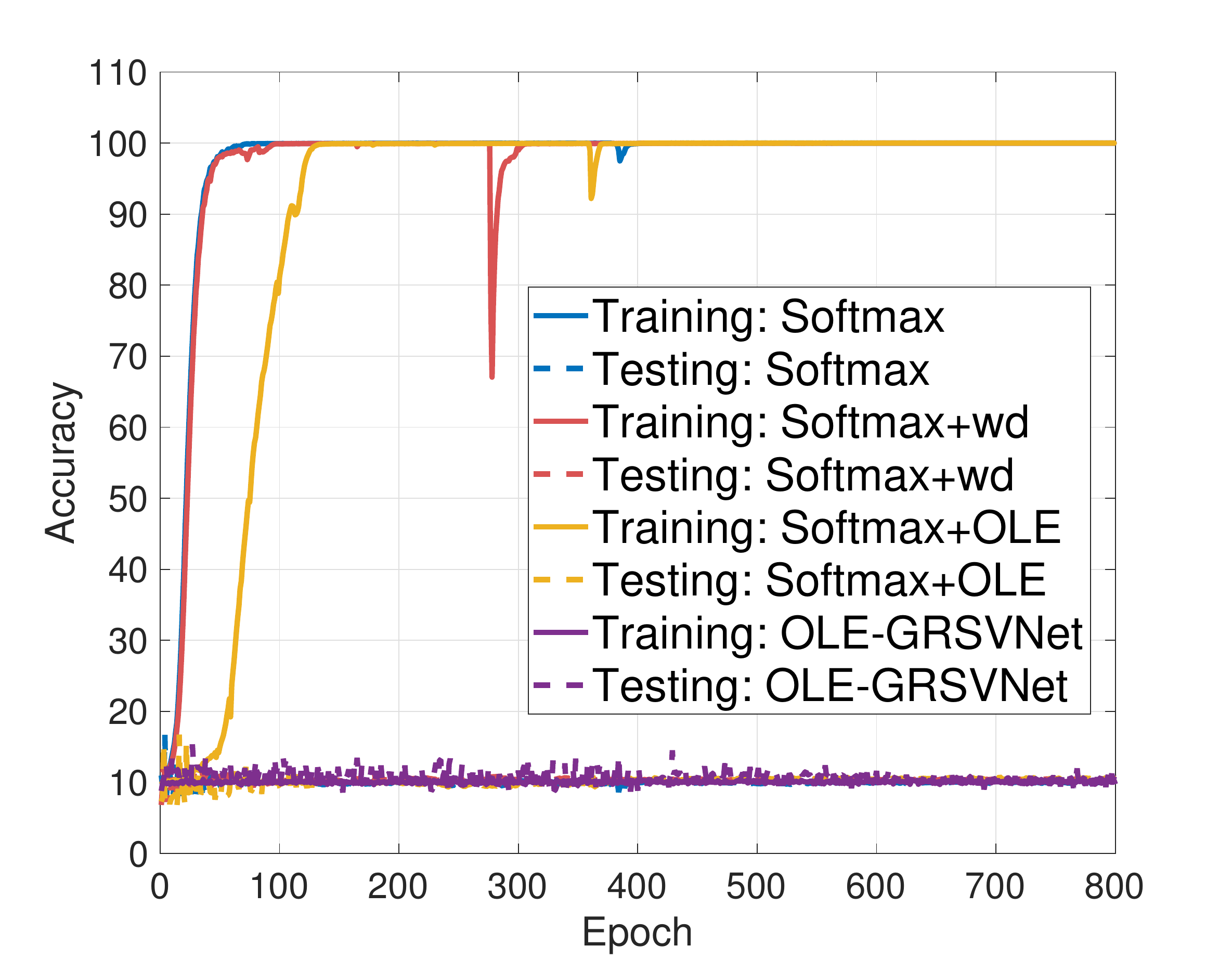}
        \caption{Training/testing accuracy with random labels}
        \label{fig:lc_randomlabel}
    \end{subfigure}
     \quad
    \begin{subfigure}[t]{0.43\textwidth}
        \includegraphics[width=\textwidth]{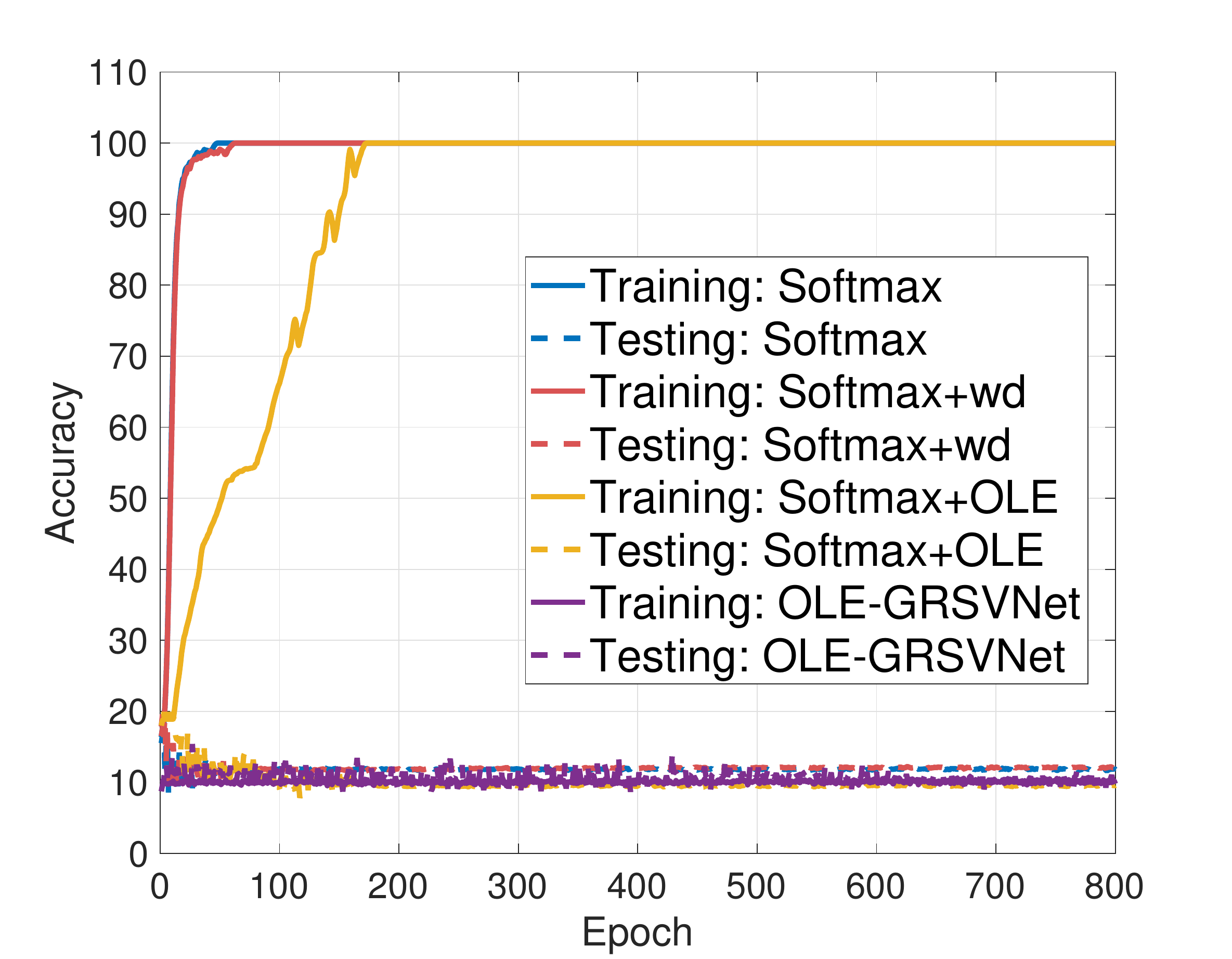}
        \caption{Training/testing accuracy with random data}
        \label{fig:lc_randomdata}
    \end{subfigure}
    \caption{Training and testing accuracy of different networks on the SVHN  dataset with random labels or random data (Gaussian noise). Note that softmax, sotmax+wd, and softmax+OLE can all perfectly (over)fit the random training data or training data with random labels. However, OLE-GRSVNet refuses to fit the training data when there is no intrinsically learnable patterns.}\label{fig:lc}
\end{figure*}

Before delving into the implementation details of OLE-GRSVNet, we first take a look at two toy experiments to illustrate our proposed framework. We use VGG-11 \cite{vgg} as the baseline architecture, and compare the performance of the following four DNNs: (a) The baseline network with a softmax classifier (softmax). (b) VGG-11 regularized by weight decay (softmax+wd). (c) VGG-11 regularized by penalizing the softmax loss with the OLE loss (softmax+OLE) (d) OLE-GRSVNet.

We first train these four DNNs on the Street View House Numbers (SVHN) dataset with the original data and labels without data augmentation. The test accuracy and the PCA embedding of the learned test features are shown in Figure~\ref{fig:architecture+feature}. OLE-GRSVNet has the highest test accuracy among the comparing DNNs. Moreover, because of the consistency between the geometric loss and the validation loss, the test features produced by OLE-GRSVNet are even more discriminative than softmax+OLE: features of the same class reside in a low-rank subspace, and different subspaces are (almost) orthogonal. Note that in Figure~\ref{fig:feature_ole_grsvnet}, features of only four classes out of ten (though ideally it should be three) have nonzero 3D embedding (Theorem~\ref{thm:global-min}).

Next, we train the same networks, without changing hyperparameters, on the SVHN dataset with either (a) randomly generated labels, or (b) random training data (Gaussian noise). We train the DNNs for 800 epochs to ensure their convergence, and the learning curves of training/testing accuracy are shown in Figure~\ref{fig:lc}. Note that the baseline DNN, with either data-independent or conventional data-dependent regularization, can perfectly (over)fit the training data, while OLE-GRSVNet refuses to memorize the training data when there are no intrinsically learnable patterns.

In another experiment, we generate three classes of one-dimensional data in $\R^{10}$: the data points in the $i$-th class are i.i.d. samples from the Gaussian distribution with the standard deviation in the $i$-th coordinate 50 times larger than other coordinates. Each class has 500 data points, and we randomly shuffle the class labels after generation. We then train a multilayer perceptron (MLP) with 128 neurons in each layer for 2000 epochs to classify these low dimensional data with random labels. We found out that only three layers are needed to perfectly classify these data when using a softmax classifier. However, after incrementally adding more layers to the baseline MLP, we found out that OLE-GRSVNet still refuses to memorize the random labels even for 100-layer MLP. This further suggests that OLE-GRSVNet refuses to memorize training data by brute force when there is no intrinsic patterns in the data. A visual illustration of this experiment is shown in Figure~\ref{fig:toy2}.

\begin{figure}
    \centering
    \begin{subfigure}[t]{0.3\textwidth}
        \includegraphics[width=\textwidth]{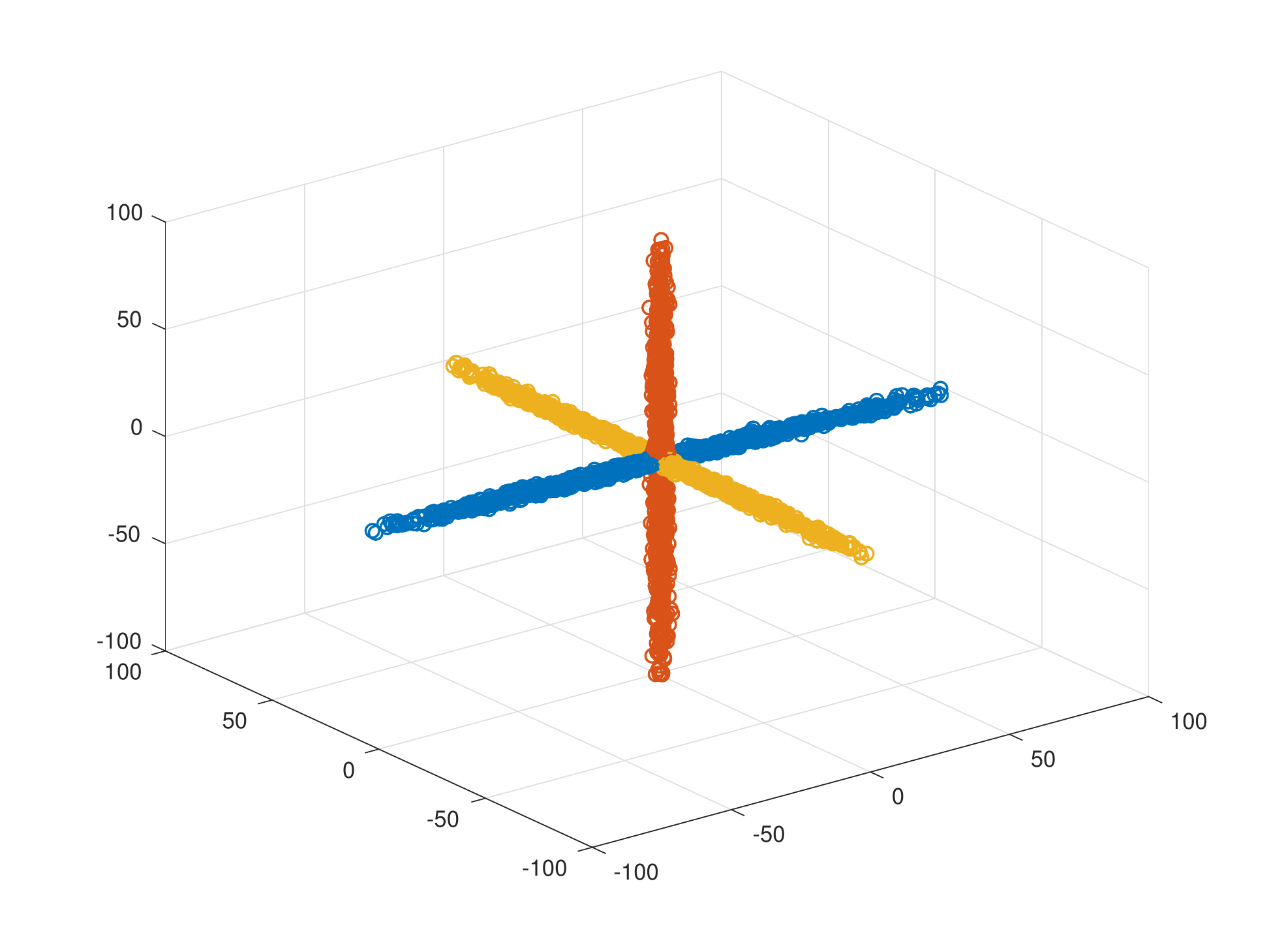}
        \caption{Original three classes of one-dimensional data in $\R^{10}$}
    \end{subfigure}
     ~
    \begin{subfigure}[t]{0.3\textwidth}
        \includegraphics[width=\textwidth]{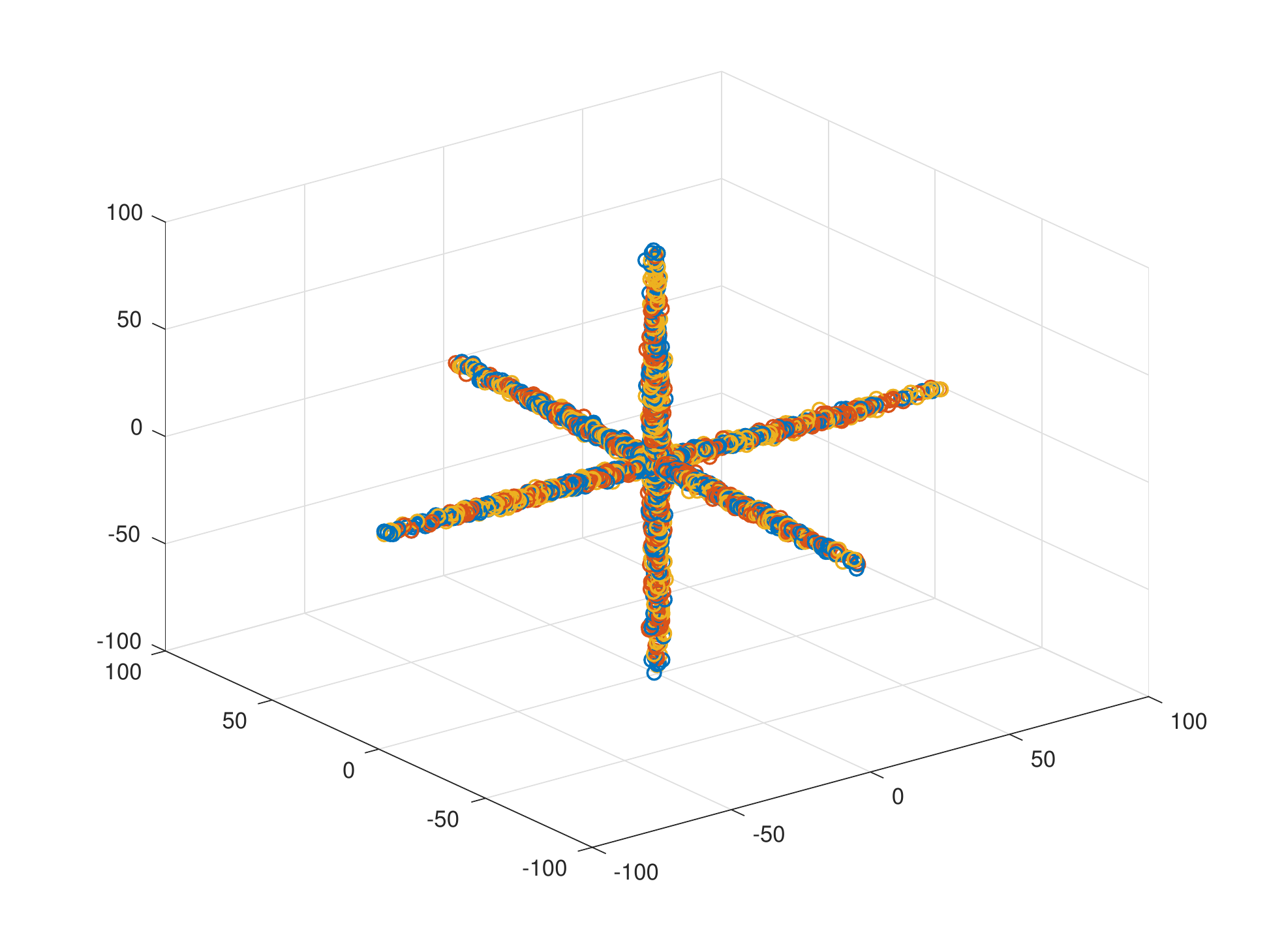}
        \caption{Labels randomly shuffled}
    \end{subfigure}
     ~
    \begin{subfigure}[t]{0.3\textwidth}
        \includegraphics[width=\textwidth]{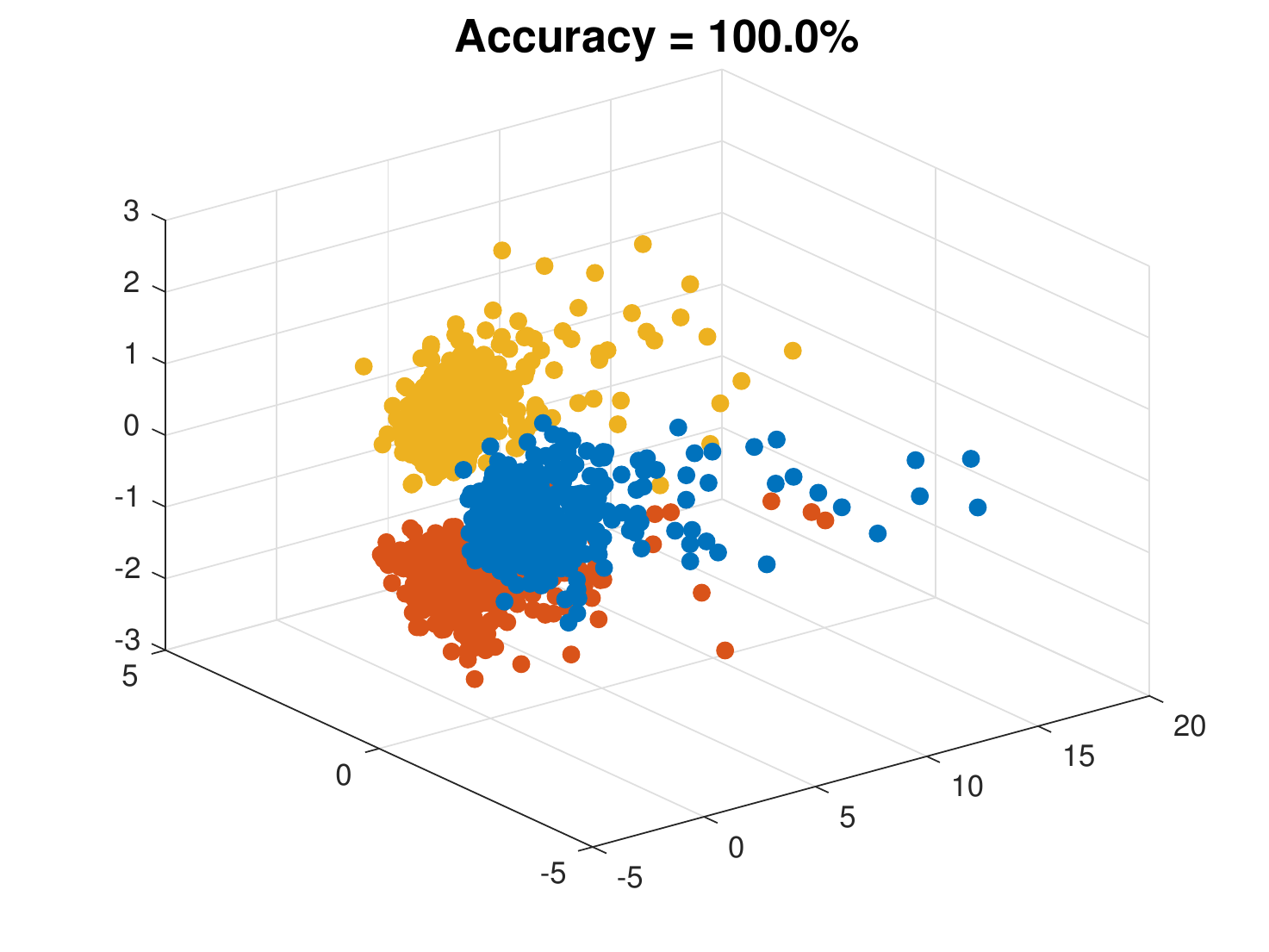}
        \caption{Softmax with 3-layer MLP}
    \end{subfigure}
     ~
    \begin{subfigure}[t]{0.3\textwidth}
        \includegraphics[width=\textwidth]{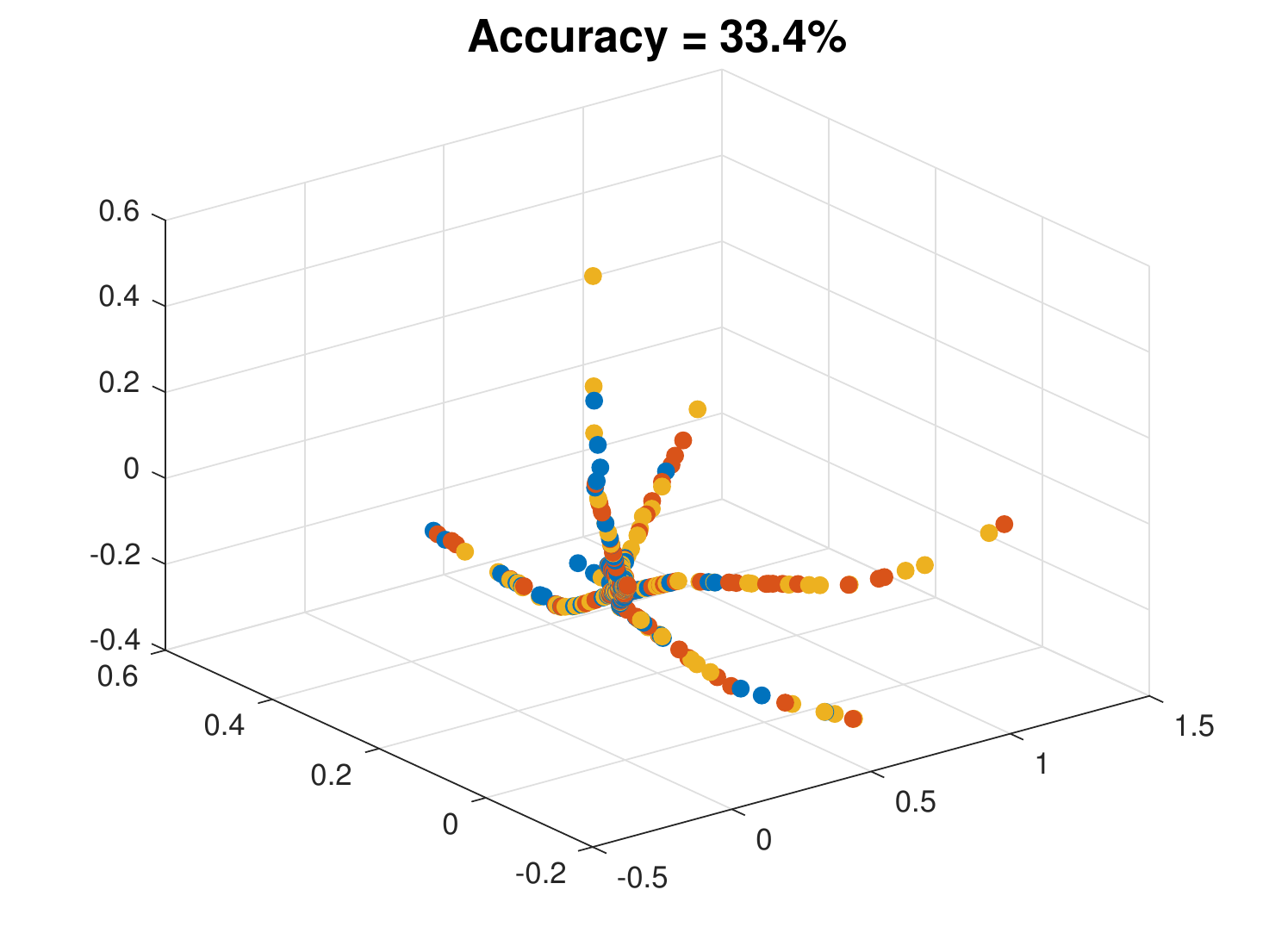}
        \caption{OLE-GRSVNet with 3-layer MLP}
    \end{subfigure}
    ~
    \begin{subfigure}[t]{0.3\textwidth}
        \includegraphics[width=\textwidth]{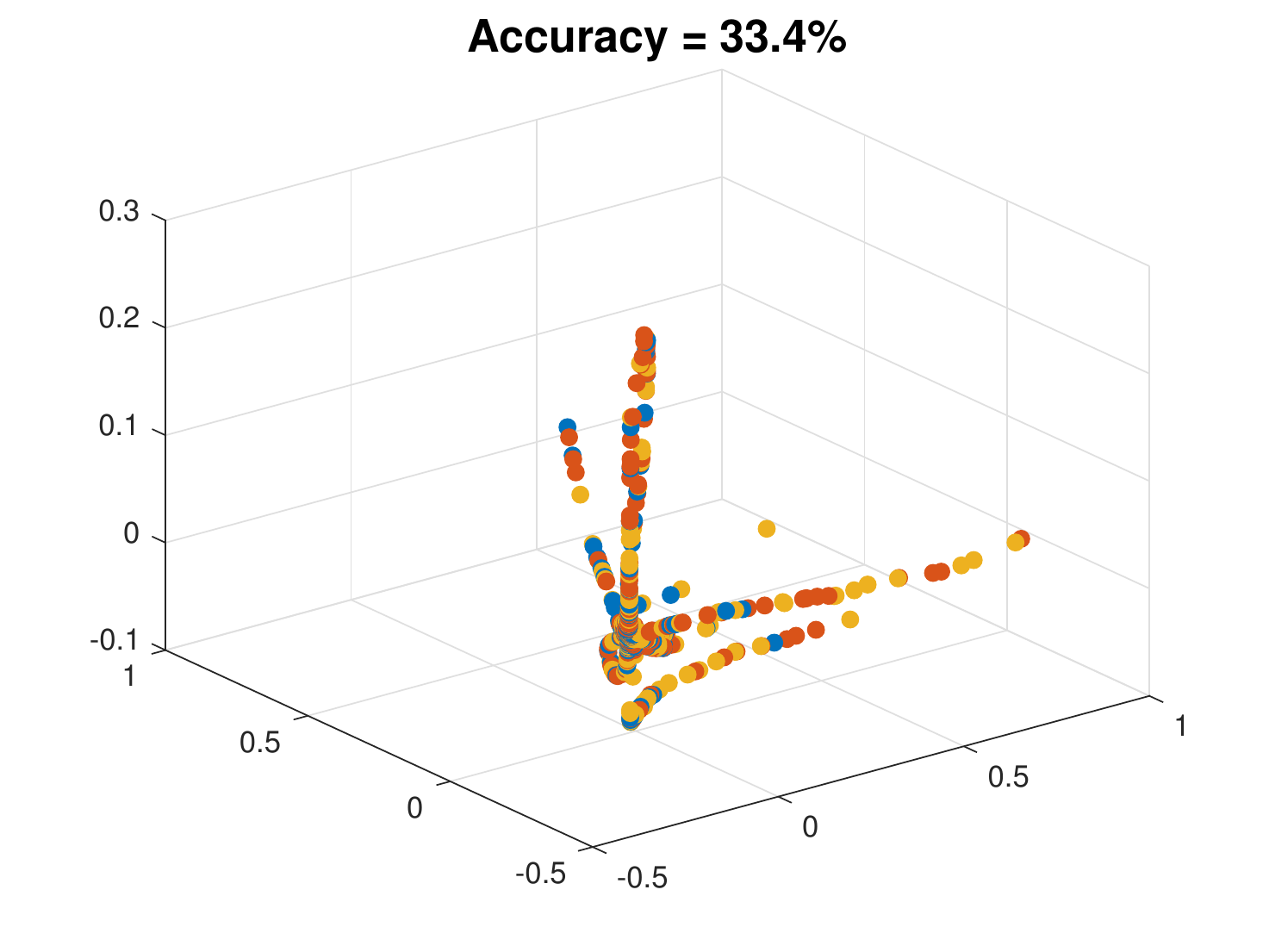}
        \caption{OLE-GRSVNet with 5-layer MLP}
    \end{subfigure}
     ~
    \begin{subfigure}[t]{0.3\textwidth}
        \includegraphics[width=\textwidth]{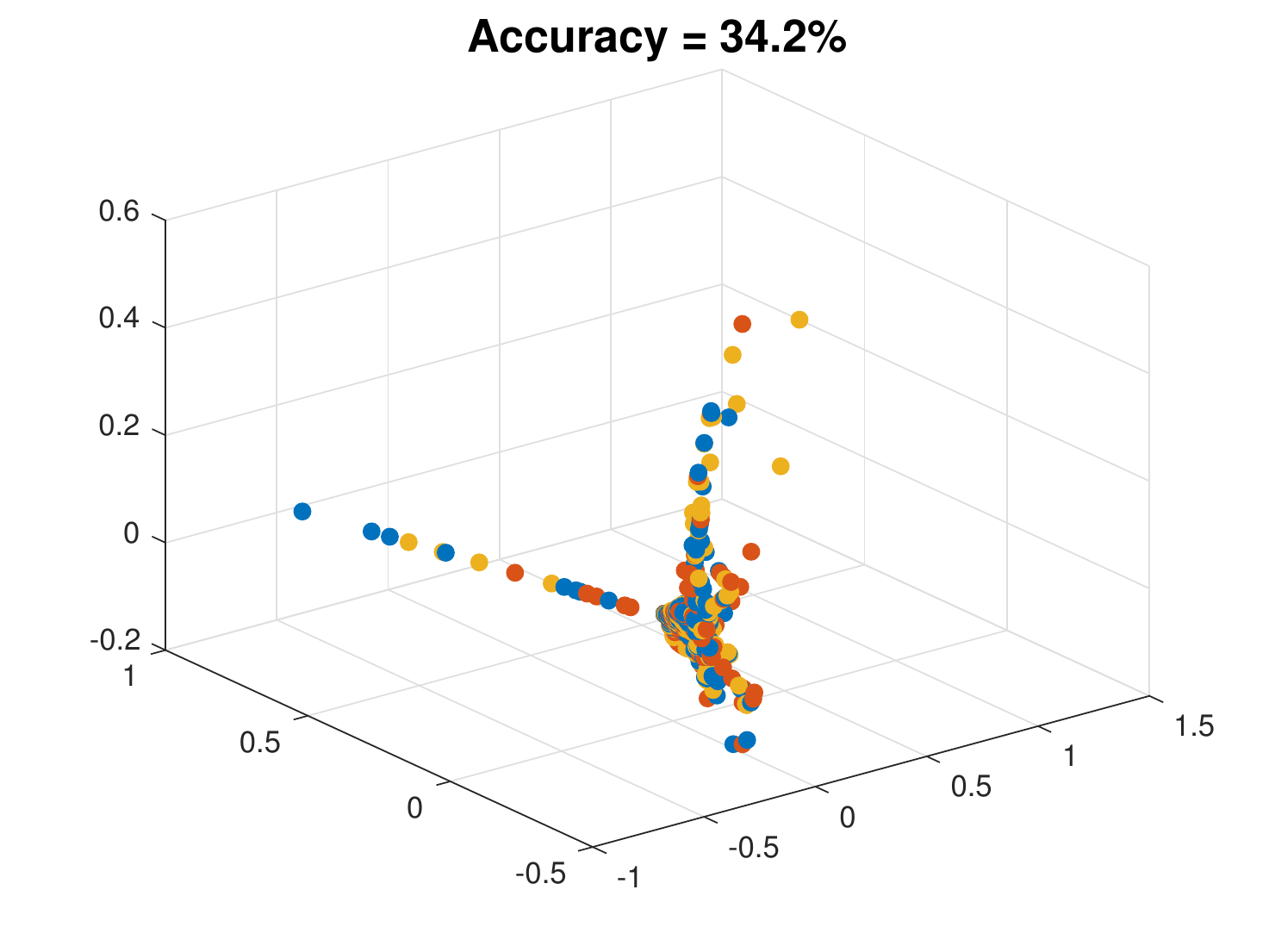}
        \caption{OLE-GRSVNet with 10-layer MLP}
    \end{subfigure}
    ~
    \begin{subfigure}[t]{0.3\textwidth}
        \includegraphics[width=\textwidth]{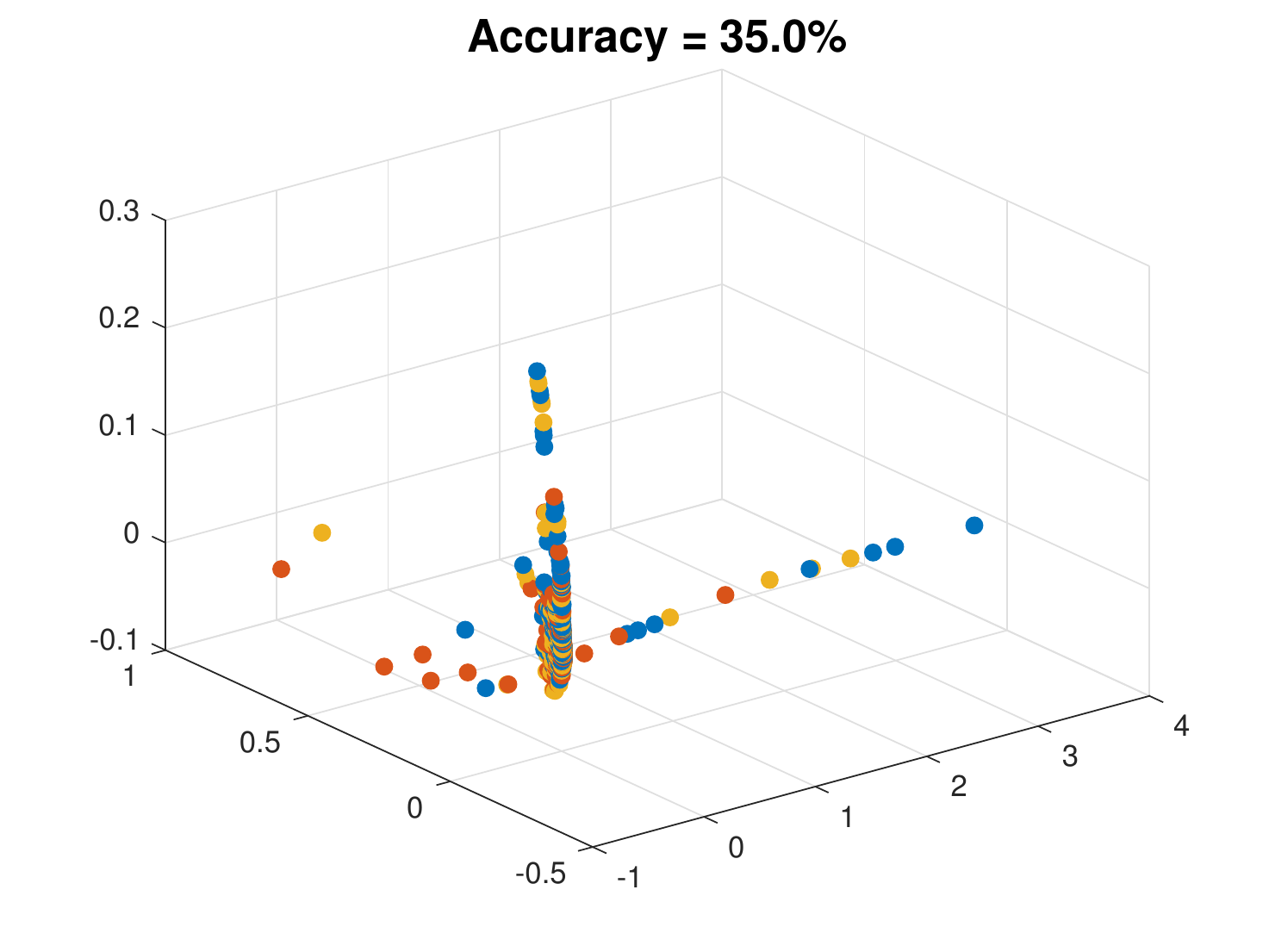}
        \caption{OLE-GRSVNet with 20-layer MLP}
    \end{subfigure}
    ~
    \begin{subfigure}[t]{0.3\textwidth}
        \includegraphics[width=\textwidth]{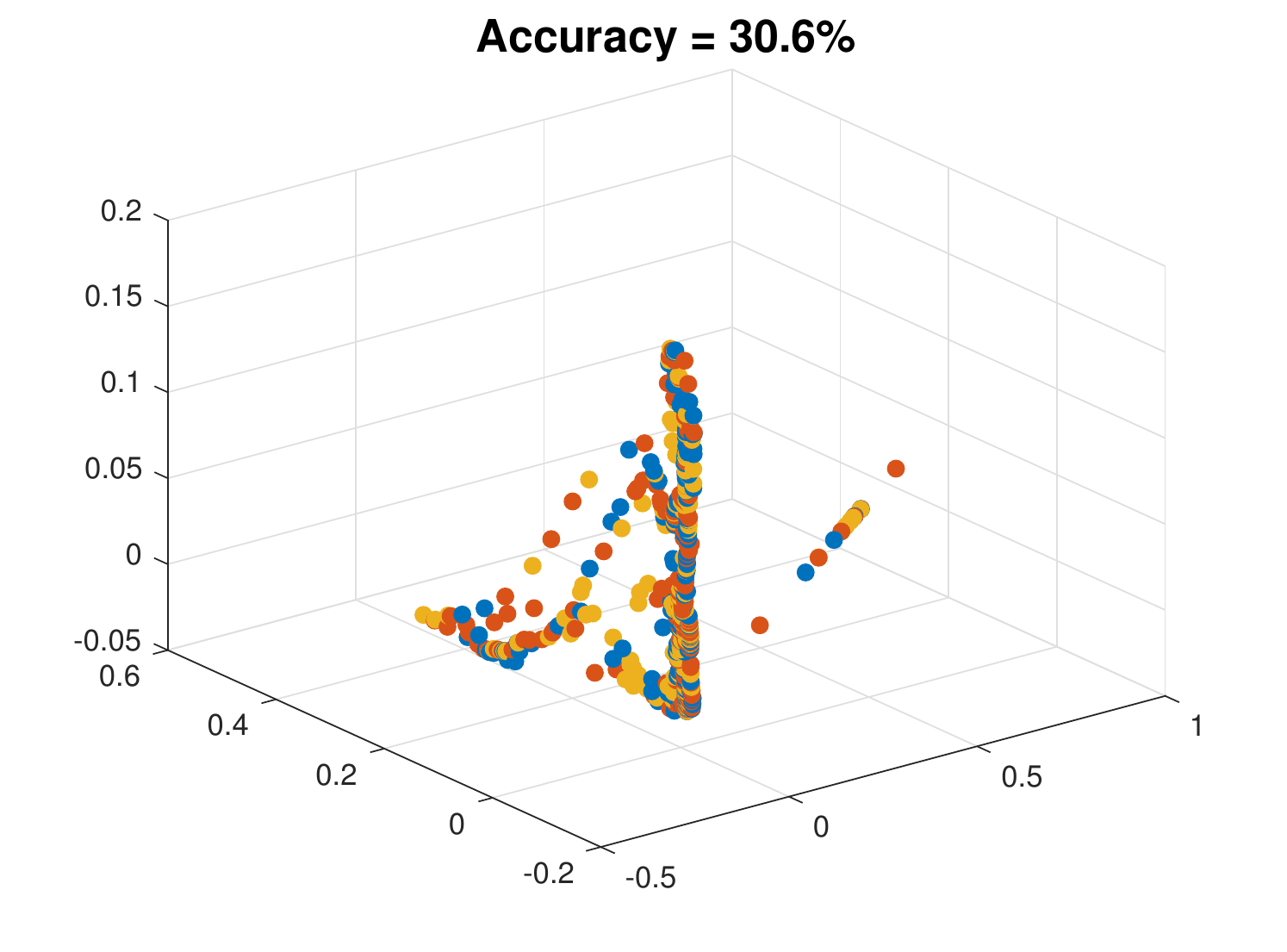}
        \caption{OLE-GRSVNet with 50-layer MLP}
    \end{subfigure}
     ~
    \begin{subfigure}[t]{0.3\textwidth}
        \includegraphics[width=\textwidth]{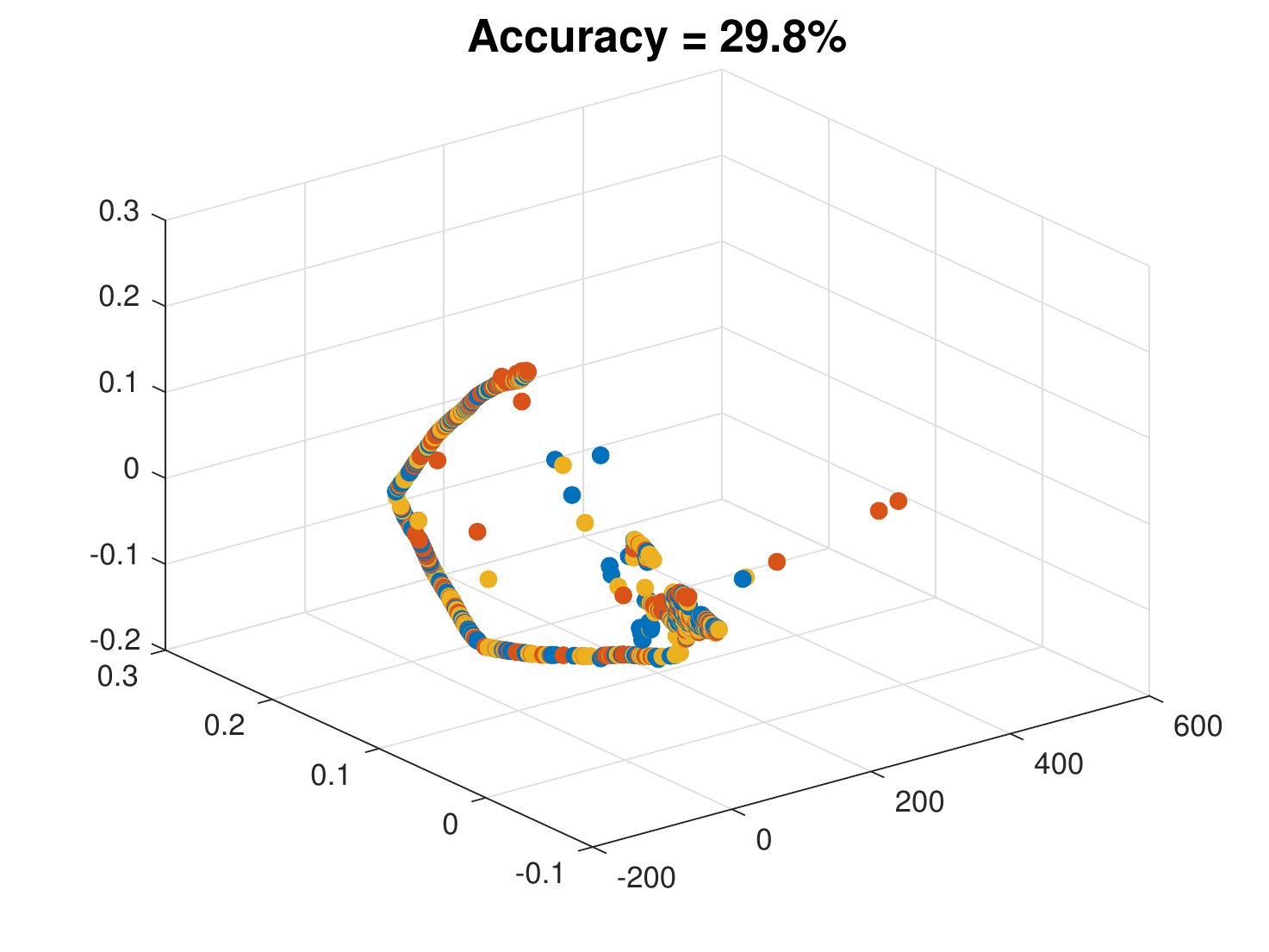}
        \caption{OLE-GRSVNet with 100-layer MLP}
    \end{subfigure}
    \caption{Visual illustration of the second toy experiment in Section 4 of the paper. (a) Three classes of one-dimensional data in $\R^{10}$. (b) Labels randomly shuffled. (c)-(i) Features extracted by baseline MLP with softmax classifier or OLE-GRSVNet. Only three layers of MLP are needed for conventional DNN to perfectly memorize random labels. But even with 100 layers of MLP, OLE-GRSVNet still refuses to memorize the random labels because there are no intrinsically learnable patterns. }\label{fig:toy2}
\end{figure}

We provide an intuitive explanation for why OLE-GRSVNet can generalize very well when given true labeled data but refuses to memorize random data or random labels. By Theorem~\ref{thm:global-min}, we know that OLE-GRSVNet obtains its global minimum if and only if the features of every random training batch exhibit the same orthogonal low-rank-subspace structure. This essentially implies that OLE-GRSVNet is implicitly conducting $O(N^{|B|})$-fold data augmentation, where $N$ is the number of training data, and $|B|$ is the batch size, while conventional data augmentation by the manipulation of the inputs, e.g., random cropping, flipping, etc., is typically $O(N)$. This poses a very interesting question: Does it mean that OLE-GRSVNet can also memorize random data if the baseline DNN has exponentially many model parameters? Or is it because of the learning algorithm (SGD) that prevents OLE-GRSVNet from learning a decision boundary too complicated for classifying random data? Answering this question will be the focus of our future research.

\section{Implementation Details of OLE-GRSVNet}
\label{sec:implementation}

Most of the operations in the computational graph  of OLE-GRSVNet (Figure~\ref{fig:architecture}) explained in Section~\ref{sec:model} are basic matrix operations. The only two exceptions are the OLE loss ($\mathbf{Z}_g\to l^g((\mathbf{Z}^g))$) and the SVD ($\mathbf{Z}^g\to (\mathbf{U}_1, \ldots, \mathbf{U}_K)$). We hereby specify their forward and backward propagations.

\subsection{Backward propagation of the OLE Loss}
According to the definition of the OLE loss in \eqref{eq:ole-loss-deep}, we only need to find a (sub)gradient of the nuclear norm to back-propagate the OLE loss. The characterization of the subdifferential of the nuclear norm is explained in \cite{nuclear}. More specifically, assuming $m\ge n$ for simplicity, let $\mathbf{U}\in \R^{m\times m}, \mathbf{\Sigma}\in \R^{m\times n}$, $\mathbf{V}\in\R^{n\times n}$ be the SVD of a rank-$s$ matrix $\mathbf{A}$. Let $\mathbf{U}=[\mathbf{U}^{(1)},\mathbf{U}^{(2)}]$, $\mathbf{V}=[\mathbf{V}^{(1)},\mathbf{V}^{(2)}]$ be the partition of $\mathbf{U}$, $\mathbf{V}$, respectively, where $\mathbf{U}^{(1)}\in\R^{m\times s}$ and $\mathbf{V}^{(1)}\in\R^{n\times s}$, then the subdifferential of the nuclear norm at $\mathbf{A}$ is:
\begin{align}
  \label{eq:subdifferential}
  \partial\|\mathbf{A}\|_* = \left\{\mathbf{U}^{(1)}\mathbf{V}^{(1)*}+\mathbf{U}^{(2)}\mathbf{W}\mathbf{V}^{(2)*}, \quad \forall \mathbf{W}\in\R^{(m-s)\times (n-s)}\text{ with } \|\mathbf{W}\|_2\le 1\right\},
\end{align}
where $\|\cdot\|_2$ is the spectral norm. Note that to use \eqref{eq:subdifferential}, one needs to identify the rank-$s$ column space of $\mathbf{A}$, i.e., $\Span(\mathbf{U}^{(1)})$ to find a subgradient, which is not necessarily easy because of the existence of numerical error. The authors in \cite{ole} intuitively truncated the numerical SVD with a small parameter chosen a priori to ensure the numerical stability. We show in the following theorem using the backward stability of SVD that such concern is, in theory, not necessary.
\begin{thm}
  \label{thm:stability}
Let $\mathbf{U}^\varepsilon, \mathbf{\Sigma}^\varepsilon, \mathbf{V}^\epsilon$ be the numerically computed reduced SVD of $\mathbf{A}\in\R^{m\times n}$, i.e., $\mathbf{U}^\varepsilon\in\R^{m\times n}$, $\mathbf{V}^\varepsilon\in\R^{n\times n}$,  $(\mathbf{U}^\varepsilon+\delta\mathbf{U}^\varepsilon)\mathbf{\Sigma}^\varepsilon(\mathbf{V}^\varepsilon+\delta\mathbf{V}^\varepsilon)^* = \mathbf{A}+\delta\mathbf{A} = \mathbf{A}^\varepsilon$, and  $\|\delta\mathbf{U}\|_2$,  $\|\delta\mathbf{V}\|_2$, $\|\delta\mathbf{A}\|_2$ are all $O(\varepsilon)$, where $\varepsilon$ is the machine error. If $\rank(\mathbf{A})=s\le n$, and the smallest singular  value $\sigma_s(\mathbf{A})$ of $\mathbf{A}$ satisfies $\sigma_s(\mathbf{A})\ge\eta>0$, we have
\begin{align}
  \label{eq:dist}
  \dis(\mathbf{U}^\varepsilon\mathbf{V}^{\varepsilon*}, \partial\|\mathbf{A}\|_*) = O(\varepsilon/\eta).
\end{align}
\end{thm}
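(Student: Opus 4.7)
My plan is to exploit backward stability to reduce to the exact SVD case, and then match the result block by block against the subdifferential formula \eqref{eq:subdifferential} using singular subspace perturbation. Write $\tilde{\mathbf{U}} := \mathbf{U}^\varepsilon + \delta\mathbf{U}^\varepsilon$ and $\tilde{\mathbf{V}} := \mathbf{V}^\varepsilon + \delta\mathbf{V}^\varepsilon$; by hypothesis these have exactly orthonormal columns and form the reduced SVD of $\mathbf{A}^\varepsilon = \mathbf{A} + \delta\mathbf{A}$ with $\|\delta\mathbf{A}\|_2 = O(\varepsilon)$. Since $\|\tilde{\mathbf{U}}\|_2 = \|\tilde{\mathbf{V}}\|_2 = 1$, a triangle inequality yields $\|\mathbf{U}^\varepsilon\mathbf{V}^{\varepsilon*} - \tilde{\mathbf{U}}\tilde{\mathbf{V}}^*\|_2 = O(\varepsilon)$, so it suffices to exhibit some $\mathbf{M}\in\partial\|\mathbf{A}\|_*$ with $\|\tilde{\mathbf{U}}\tilde{\mathbf{V}}^* - \mathbf{M}\|_2 = O(\varepsilon/\eta)$. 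Partitioning $\tilde{\mathbf{U}}=[\tilde{\mathbf{U}}^{(1)},\tilde{\mathbf{U}}^{(2)}]$ and $\tilde{\mathbf{V}}=[\tilde{\mathbf{V}}^{(1)},\tilde{\mathbf{V}}^{(2)}]$ analogously to \eqref{eq:subdifferential}, I split $\tilde{\mathbf{U}}\tilde{\mathbf{V}}^* = \tilde{\mathbf{U}}^{(1)}\tilde{\mathbf{V}}^{(1)*} + \tilde{\mathbf{U}}^{(2)}\tilde{\mathbf{V}}^{(2)*}$ and treat the two summands separately.

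\textbf{Subspace bounds and the $(2)$-block.} Weyl's inequality gives $|\sigma_i^\varepsilon - \sigma_i(\mathbf{A})| \le \|\delta\mathbf{A}\|_2 = O(\varepsilon)$, so for $\varepsilon$ small enough the top $s$ computed singular values remain $\ge \eta/2$ while the tail is $O(\varepsilon)$, producing a spectral gap of at least $\eta/2$ between the $(1)$ and $(2)$ blocks. Wedin's $\sin\Theta$ theorem then yields
\[
\|\mathbf{U}^{(1)*}\tilde{\mathbf{U}}^{(2)}\|_2 = O(\varepsilon/\eta), \qquad \|\mathbf{V}^{(1)*}\tilde{\mathbf{V}}^{(2)}\|_2 = O(\varepsilon/\eta).
\]
Define $\mathbf{W} := \mathbf{U}^{(2)*}\tilde{\mathbf{U}}^{(2)}\,\tilde{\mathbf{V}}^{(2)*}\mathbf{V}^{(2)}$; it is a product of four column-orthonormal submatrices, so $\|\mathbf{W}\|_2\le 1$, making $\mathbf{M} := \mathbf{U}^{(1)}\mathbf{V}^{(1)*} + \mathbf{U}^{(2)}\mathbf{W}\mathbf{V}^{(2)*}$ a valid element of $\partial\|\mathbf{A}\|_*$. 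Rewriting the residual as
\[
\tilde{\mathbf{U}}^{(2)}\tilde{\mathbf{V}}^{(2)*} - \mathbf{U}^{(2)}\mathbf{W}\mathbf{V}^{(2)*} = (\mathbf{I} - \mathbf{U}^{(2)}\mathbf{U}^{(2)*})\tilde{\mathbf{U}}^{(2)}\tilde{\mathbf{V}}^{(2)*} + \mathbf{U}^{(2)}\mathbf{U}^{(2)*}\tilde{\mathbf{U}}^{(2)}\tilde{\mathbf{V}}^{(2)*}(\mathbf{I} - \mathbf{V}^{(2)}\mathbf{V}^{(2)*})
\]
and invoking $\mathbf{I} - \mathbf{U}^{(2)}\mathbf{U}^{(2)*} = \mathbf{U}^{(1)}\mathbf{U}^{(1)*}$ (and similarly for $\mathbf{V}$), each term is $O(\varepsilon/\eta)$ by the $\sin\Theta$ estimates above.

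\textbf{The $(1)$-block, which I expect to be the main obstacle.} I still need $\|\tilde{\mathbf{U}}^{(1)}\tilde{\mathbf{V}}^{(1)*} - \mathbf{U}^{(1)}\mathbf{V}^{(1)*}\|_2 = O(\varepsilon/\eta)$. The difficulty is that the individual columns of $\tilde{\mathbf{U}}^{(1)}, \tilde{\mathbf{V}}^{(1)}$ are only determined up to correlated orthogonal rotations, and are genuinely ambiguous whenever singular values coincide, so a column-wise comparison is not available. My strategy is to observe that both matrices here are gauge-invariant expressions: $\mathbf{U}^{(1)}\mathbf{V}^{(1)*}$ is the polar (sign) factor of the rank-$s$ matrix $\mathbf{A}$, and $\tilde{\mathbf{U}}^{(1)}\tilde{\mathbf{V}}^{(1)*}$ is that of $\tilde{\mathbf{U}}^{(1)}\tilde{\mathbf{\Sigma}}^{(1)}\tilde{\mathbf{V}}^{(1)*}$, the best rank-$s$ approximation of $\mathbf{A}^\varepsilon$. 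Because $\mathbf{A}$ itself has rank $s$, this approximation lies within $O(\varepsilon)$ of $\mathbf{A}$, and both matrices share smallest nonzero singular value $\ge \eta/2$. Using the closed form $\mathbf{U}^{(1)}\mathbf{V}^{(1)*} = \mathbf{A}(\mathbf{A}^*\mathbf{A})^{+1/2}$ together with the standard Lipschitz bound for the Moore-Penrose pseudoinverse along the fixed-rank stratum delivers the required $O(\varepsilon/\eta)$ estimate. Summing the three contributions produces $\dis(\mathbf{U}^\varepsilon\mathbf{V}^{\varepsilon*}, \partial\|\mathbf{A}\|_*) = O(\varepsilon/\eta)$, completing the proof.
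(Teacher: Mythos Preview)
Your argument is correct and follows the same high-level two-block strategy as the paper, but the execution differs in two respects worth noting. First, for the $(2)$-block the paper does not build an explicit $\mathbf{W}$; instead it proves a preparatory lemma (Lemma~\ref{lemma:1}) showing that in the subdifferential formula \eqref{eq:subdifferential} the matrices $\mathbf{U}^{(2)},\mathbf{V}^{(2)}$ may be replaced by \emph{any} orthonormal bases $\tilde{\mathbf{U}}^{(2)},\tilde{\mathbf{V}}^{(2)}$ of the orthogonal complements, and then simply takes $\tilde{\mathbf{W}}=\mathbf{I}$ with $\tilde{\mathbf{U}}^{(2)},\tilde{\mathbf{V}}^{(2)}$ chosen $O(\varepsilon/\eta)$-close to $\mathbf{U}^{(2)\varepsilon},\mathbf{V}^{(2)\varepsilon}$. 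Your projection construction of $\mathbf{W}$ achieves the same end without the lemma, which is a slight economy. Second, for the $(1)$-block the paper asserts $\|\mathbf{U}^{(1)}-\mathbf{U}^{(1)\varepsilon}\|_2=O(\varepsilon/\eta)$ and $\|\mathbf{V}^{(1)}-\mathbf{V}^{(1)\varepsilon}\|_2=O(\varepsilon/\eta)$ directly ``by the backward stability of SVD,'' without commenting on the rotation ambiguity you flag; your route via the gauge-invariant polar factor $\mathbf{A}(\mathbf{A}^*\mathbf{A})^{+1/2}$ and fixed-rank pseudoinverse perturbation is more careful on exactly this point. The paper's version is shorter; yours is cleaner about what is actually stable under perturbation.
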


However, in practice we did observe that using a small threshold ($10^{-6}$ in this work) to truncate the numerical SVD can speed up the convergence, especially in the first few epochs of training. With the help of Theorem~\ref{thm:stability}, we can easily find a stable subgradient of the OLE loss in \eqref{eq:ole-loss-deep}.

\subsection{Forward and backward propagation of $\mathbf{Z}^g\to (\mathbf{U}_1, \ldots, \mathbf{U}_K)$}
Unlike the computation of the subgradient in Theorem~\ref{thm:stability}, we have to threshold the singular vectors of $\mathbf{Z}_c^g$, because the desired output $\mathbf{U}_c$ should be an orthonormal basis of the low-rank subspace $\Span(\mathbf{Z}^g_c)$. In the forward propagation, we threshold the  singular vectors $\mathbf{U}_c$ such that the smallest singular value is at least $1/10$ of the largest singular value.

As for the backward propagation, one needs to know the Jacobian of SVD, which has been explained in \cite{svd}. Typically, for a matrix $\mathbf{A}\in\R^{n\times n}$, computing the Jacobian of the SVD of $\mathbf{A}$ involves solving a total of $O(n^4)$ $2\times 2$ linear systems. We have not implemented the backward propagation of SVD in this work because this involves technical implementation with CUDA API. In our current implementation, the node $(\mathbf{U}_1, \ldots, \mathbf{U}_K)$ is detached from the computational graph during back propagation, i.e., the validation loss $l_v$ is only propagated back through the path $l_v\to\hat{\mathbf{Y}}^v\to\mathbf{Z}^v\to\theta$. Our rational is this: The validation loss $l_v$ can be propagated back through two paths: $l_v\to\hat{\mathbf{Y}}^v\to\mathbf{Z}^v\to\theta$ and $l_v\to\hat{\mathbf{Y}}^v\to (\mathbf{U_1},\ldots,\mathbf{U}_K)\to\mathbf{Z}^g\to\theta$. The first path will modify $\theta$ so that $\mathbf{Z}^v_c$ moves closer to $\mathbf{U}_c$, while the second path will move $\mathbf{U}_c$ closer to $\mathbf{Z}^v_c$. Cutting off the second path when computing the gradient might decrease the speed of convergence, but our numerical experiments suggest that the training process is still well-behaved under such simplification.

\section{Experimental Results}

In this section, we demonstrate the superiority of OLE-GRSVNet when compared to conventional DNNs in two aspects: (a) It has greater generalization power when trained on true data and true labels. (b) Unlike conventionally regularized DNNs, OLE-GRSVNet refuses to memorize the training samples when given random training data or random labels.

We use similar experimental setup as in Section~\ref{sec:toy}. The same four modifications to three baseline architectures (VGG-11,16,19 \cite{vgg}) are considered: (a) \textbf{Softmax}. (b) \textbf{Softmax+wd}. (c) \textbf{Softmax+OLE} (d) \textbf{OLE-GRSVNet}. The performance of the networks are tested on the following datasets:
\begin{itemize}
\item \textbf{MNIST}. The MNIST dataset contains $28\times 28$ grayscale images of digits from $0$ to $9$. There are 60,000 training samples and 10,000 testing samples. No data augmentation was used.
\item \textbf{SVHN}. The Street View House Numbers (SVHN) dataset contains $32\times 32$ RGB images of digits from 0 to 9. The training and testing set contain 73,257 and 26,032 images respectively. No data augmentation was used.
\item \textbf{CIFAR}. This dataset contains $32\times 32$ RGB images of ten  classes, with 50,000 images for training and 10,000 images for testing. We use ``\textbf{CIFAR+}'' to denote experiments on CIFAR with data augmentation: 4 pixel padding, $32\times 32$ random cropping and horizontal flipping.
\end{itemize}

\subsection{Training details}
All networks are trained from scratch with the ``Xavier'' initialization \cite{xavier}. SGD with Nesterov momentum 0.9 is used for the optimization, and the batch size is set to 200 (a 100/100 split for geometry/validation batch is used in OLE-GRSVNet). We set the initial learning rate to 0.01, and decrease it ten-fold at 50\% and 75\% of the total training epochs. For the experiments with true labels, all networks are trained for 100, 160 epochs for MNIST, SVHN, respectively. For CIFAR, we train the networks for 200, 300, 400 epochs for VGG-11, VGG16, VGG-19, respectively. In order to ensure the convergence of SGD, all networks are trained for 800 epochs for the experiments with random labels. The mean accuracy after five independent trials is reported.

The weight decay parameter is always set to $\mu = 10^{-4}$. The weight for the OLE loss in ``softmax+OLE'' is chosen according to \cite{ole}. More specifically, it is set to $0.5$ for MNIST and SVHN, $0.5$ for CIFAR with VGG-11 and VGG-16, and $0.25$ for CIFAR with VGG-19. For OLE-GRSVNet, the parameter $\lambda$ in \eqref{eq:total_loss} is determined by cross-validation. More specifically, we set $\lambda=10$ for MNIST,  $\lambda=5$ for SVHN and CIFAR with VGG-11 and VGG-16, and $\lambda=1$ for CIFAR with VGG-19. 

\subsection{Testing/training performance when trained on data with real  or random labels}

\begin{table}
  \centering
  \begin{tabular}{lcllll}
    \toprule
    & &    \multicolumn{4}{c}{Testing (training) accuracy (\%)}                   \\
    \cmidrule(r){3-6}
    Dataset & VGG & Softmax & Softmax+wd & Softmax+OLE & OLE-GRSVNet\\
    \midrule
    MNIST&  11 & 99.40 (100.00)  & 99.47 (100.00) & 99.49 (100.00) & \bfseries 99.57 (9.93)\\
    \midrule    
    SVHN&  11  &  93.10 (99.99)  &  93.73 (100.00)  &  94.04 (99.99)  &  \bfseries 94.75 (9.75)\\
    \midrule    
    CIFAR&  11 &  81.81 (100.00) &  81.87 (100.00)   &  82.04 (99.95)   &  \bfseries 85.29 (9.97)\\
    \midrule
    CIFAR&  16 &  83.37 (100.00)  &  83.97 (99.99)  &  84.35 (99.96)  &  \bfseries 87.44 (10.13)\\
    \midrule
    CIFAR&  19 &  83.56 (99.99)  &  84.21 (99.97)  &  84.71 (99.96)  &  \bfseries 86.69 (9.86)\\
    \midrule    
    CIFAR+& 11 &  89.52 (99.98) &  89.68 (99.98)  &  90.04 (99.93)   &  \bfseries 90.58 (10.05)\\
    \midrule
    CIFAR+&  16 &  91.21 (99.96)  &  91.29 (99.96)  &  91.40 (99.92)  &  \bfseries 92.15 (9.94)\\
    \midrule
    CIFAR+&  19 &  91.19 (99.96)  &  91.53 (99.95)  &  {\bfseries 91.67} (99.91)  &  91.65 \bfseries(10.07)\\
    \bottomrule
  \end{tabular}
  \vspace{2mm}
  \caption{Testing or training accuracies when trained on training data with real or random labels. The numbers without parentheses are the percentage accuracies on the \textbf{testing} data when networks are trained with \textbf{real} labels. The numbers enclosed in parentheses are the accuracies on the \textbf{training} data when networks are trained with \textbf{random} labels. This suggests that OLE-GRSVNet outperforms conventional DNNs on the testing data when trained with real labels. Moreover, unlike conventional DNNs, OLE-GRSVNet refuses to memorize the training data when trained with random labels.}
  \label{tab:exp1}
\end{table}

Table~\ref{tab:exp1} reports the performance of the networks trained on the original data with real or randomly generated labels. The numbers without parentheses are the percentage accuracies on the \textbf{test} data when networks are trained with \textbf{real} labels, and the numbers enclosed in parentheses are the accuracies on the \textbf{training} data when given \textbf{random} labels. Accuracies on the training data with real labels (always 100\%) and accuracies on the test data with random labels (always close to 10\%) are omitted from the table. As we can see, similar to the experiment in Section~\ref{sec:toy}, when trained with real labels, OLE-GRSVNet exhibits better generalization performance than the competing networks. But when trained with random labels, OLE-GRSVNet refuses to memorize the training samples like the other networks because there are no intrinsically learnable patterns. This is still the case even if we increase the number of training epochs to 2000.

We point out that by combining different regularization and tuning the hyperparameters, the test error of conventional DNNs can indeed be reduced. For example, if we combine weight decay, conventional OLE regularization, batch normalization, data augmentation, and increase the learning rate from $0.01$ to $0.1$, the test accuracy on the CIFAR dataset can be pushed to $91.02\%$. However, this does not change the fact that such network can still perfectly memorize training samples when given random labels. This corroborates the claim in \cite{zhang2017} that conventional regularization appears to be more of a tuning parameter instead of playing an essential role in reducing network capacity.

\section{Conclusion and future work}
We proposed a general framework, GRSVNet, for data-dependent DNN regularization. The core idea  is the self-validation of the enforced geometry on a separate batch using a validation loss consistent with the geometric loss, so that the predicted label distribution has a meaningful geometric interpretation. In particular, we study a special case of GRSVNet, OLE-GRSVNet, which is capable of producing highly discriminative features: samples from the same class belong to a low-rank subspace, and the subspaces for different classes are orthogonal. When trained on benchmark datasets with real labels, OLE-GRSVNet achieves better test accuracy when compared to DNNs  with different regularizations sharing the same baseline architecture. More importantly, unlike conventional DNNs, OLE-GRSVNet refuses to memorize and overfit the training data when trained on random labels or random data. This suggests that OLE-GRSVNet effectively reduces the memorizing capacity of DNNs, and it only extracts intrinsically learnable patterns from the data.

Although we provided some intuitive explanation as to why GRSVNet generalizes well on real data and refuses overfitting random data, there are still open questions to be answered. For example, what is the minimum representational capacity of the baseline DNN (i.e., number of layers and number of units) to make even GRSVNet trainable on random data? Or is it because of the learning algorithm (SGD) that prevents GRSVNet from learning a decision boundary that is too complicated for random samples? Moreover, we still have not answered why conventional DNNs, while fully capable of memorizing random data by brute force, typically find generalizable solutions on real data. These questions will be the focus of our future work.

\section*{Acknowledgments}
The authors would like to thank Jos\'e Lezama for providing the code of OLE \cite{ole}. Work partially supported by NSF, DoD, NIH, and Google.

\bibliographystyle{abbrv}
\bibliography{grsvnet}

\appendix

\section{Proof of Theorem~\ref{thm:lg}}
It suffices to prove the case when $K=2$, as the case for larger $K$ can be proved by induction. In order to simplify the notation, we restate the original theorem for $K=2$:
\begin{thm*}
  Let $\mathbf{A}\in\R^{N\times m}$ and $\mathbf{B}\in \R^{N\times n}$ be matrices of the same row dimensions, and $[\mathbf{A},\mathbf{B}]\in \R^{N\times (m+n)}$ be the concatenation of $\mathbf{A}$ and $\mathbf{B}$. We have
  \begin{align}
    \label{eq:thmA}
    \|[\mathbf{A},\mathbf{B}]\|_* \le \|\mathbf{A}\|_*+\|\mathbf{B}\|_*.
  \end{align}
Moreover, the equality holds if and only if $\mathbf{A}^*\mathbf{B}=\mathbf{0}$, i.e., the column spaces of $\mathbf{A}$ and $\mathbf{B}$ are orthogonal.
\end{thm*}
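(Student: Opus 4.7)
The plan is to exploit the dual characterization
$$\|\mathbf{M}\|_* = \max_{\|\mathbf{X}\|_2 \le 1} \Tr(\mathbf{M}^*\mathbf{X}),$$
with $\|\cdot\|_2$ the spectral norm, for both the inequality and its equality case. The trivial cases $\mathbf{A}=\mathbf{0}$ or $\mathbf{B}=\mathbf{0}$ are immediate, so I assume both are nonzero and fix compact SVDs $\mathbf{A}=\mathbf{U}_A\mathbf{\Sigma}_A\mathbf{V}_A^*$, $\mathbf{B}=\mathbf{U}_B\mathbf{\Sigma}_B\mathbf{V}_B^*$ with $r_A=\rank(\mathbf{A})$, $r_B=\rank(\mathbf{B})$.

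\textit{Inequality.} For any test $\mathbf{X}$ with $\|\mathbf{X}\|_2\le 1$, partition $\mathbf{X}=[\mathbf{X}_1,\mathbf{X}_2]$ conformably with the column blocks of $[\mathbf{A},\mathbf{B}]$. Since extracting a column submatrix cannot increase spectral norm, $\|\mathbf{X}_i\|_2\le\|\mathbf{X}\|_2\le 1$, and splitting the trace gives $\Tr([\mathbf{A},\mathbf{B}]^*\mathbf{X})=\Tr(\mathbf{A}^*\mathbf{X}_1)+\Tr(\mathbf{B}^*\mathbf{X}_2)\le\|\mathbf{A}\|_*+\|\mathbf{B}\|_*$. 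Taking the max over $\mathbf{X}$ yields \eqref{eq:thmA}.

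\textit{Sufficiency $(\Leftarrow)$.} If $\mathbf{A}^*\mathbf{B}=\mathbf{0}$ then $\mathbf{U}_A^*\mathbf{U}_B=\mathbf{0}$, so $[\mathbf{U}_A,\mathbf{U}_B]$ has orthonormal columns. The factorization
$$[\mathbf{A},\mathbf{B}]=[\mathbf{U}_A,\mathbf{U}_B]\begin{pmatrix}\mathbf{\Sigma}_A & \mathbf{0}\\ \mathbf{0}&\mathbf{\Sigma}_B\end{pmatrix}\begin{pmatrix}\mathbf{V}_A & \mathbf{0}\\ \mathbf{0}&\mathbf{V}_B\end{pmatrix}^*$$
is then (up to reordering) a valid SVD of $[\mathbf{A},\mathbf{B}]$, whose singular values are the concatenation of those of $\mathbf{A}$ and $\mathbf{B}$. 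Summing gives $\|[\mathbf{A},\mathbf{B}]\|_*=\|\mathbf{A}\|_*+\|\mathbf{B}\|_*$.

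\textit{Necessity $(\Rightarrow)$.} Suppose equality in \eqref{eq:thmA} holds and let $[\mathbf{X}_1,\mathbf{X}_2]$ be a dual maximizer. The summand-by-summand inequalities in the Inequality step must then collapse to equalities, i.e., $\Tr(\mathbf{A}^*\mathbf{X}_1)=\|\mathbf{A}\|_*$ with $\|\mathbf{X}_1\|_2\le 1$, and symmetrically for $\mathbf{X}_2$. Setting $\mathbf{M}:=\mathbf{U}_A^*\mathbf{X}_1\mathbf{V}_A$, so $\|\mathbf{M}\|_2\le 1$, the identity $\Tr(\mathbf{\Sigma}_A\mathbf{M})=\|\mathbf{A}\|_*$ together with $|M_{ii}|\le\|\mathbf{M}\|_2\le 1$ forces $M_{ii}=1$, and then $\|\mathbf{M}e_i\|\le 1$ with $(\mathbf{M}e_i)_i=1$ pins down $\mathbf{M}=\mathbf{I}_{r_A}$. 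A Pythagoras argument on each column $y_i$ of $\mathbf{Y}:=\mathbf{X}_1\mathbf{V}_A$ (write $y_i=u_i+y_i^\perp$ with $y_i^\perp\perp\Span(\mathbf{U}_A)$ and $\|y_i\|\le 1$) promotes this to the pointwise identity $\mathbf{X}_1\mathbf{V}_A=\mathbf{U}_A$, and symmetrically $\mathbf{X}_2\mathbf{V}_B=\mathbf{U}_B$. Let $\mathbf{W}=\begin{pmatrix}\mathbf{V}_A & \mathbf{0}\\ \mathbf{0}&\mathbf{V}_B\end{pmatrix}$, which has orthonormal columns, so $\|\mathbf{W}\|_2=1$. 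Then $[\mathbf{X}_1,\mathbf{X}_2]\mathbf{W}=[\mathbf{U}_A,\mathbf{U}_B]$ forces $\|[\mathbf{U}_A,\mathbf{U}_B]\|_2\le 1$. Since the Gram matrix $[\mathbf{U}_A,\mathbf{U}_B]^*[\mathbf{U}_A,\mathbf{U}_B]=\begin{pmatrix}\mathbf{I}&\mathbf{U}_A^*\mathbf{U}_B\\ \mathbf{U}_B^*\mathbf{U}_A&\mathbf{I}\end{pmatrix}$ has eigenvalues $1\pm\sigma_i(\mathbf{U}_A^*\mathbf{U}_B)$, the bound $1+\sigma_{\max}(\mathbf{U}_A^*\mathbf{U}_B)\le 1$ forces $\mathbf{U}_A^*\mathbf{U}_B=\mathbf{0}$, i.e., $\mathbf{A}^*\mathbf{B}=\mathbf{0}$.

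\textit{Main obstacle.} The delicate step is the necessity direction: turning a scalar equality of nuclear norms into the structural identity $\mathbf{A}^*\mathbf{B}=\mathbf{0}$. The crux is upgrading the saturated trace bound to the strong pointwise identity $\mathbf{X}_1\mathbf{V}_A=\mathbf{U}_A$ for any dual maximizer (not merely a subdifferential inclusion), and then placing the $\mathbf{A}$- and $\mathbf{B}$-constraints inside a single unit spectral-norm ball so that the block-Gram eigenvalue formula $1\pm\sigma_i(\mathbf{U}_A^*\mathbf{U}_B)$ reads off the required orthogonality of column spaces.
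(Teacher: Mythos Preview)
Your proof is correct and takes a genuinely different route from the paper's. For the inequality and the sufficiency direction the two arguments are close in spirit: the paper uses the triangle inequality $\|[\mathbf{A},\mathbf{0}]+[\mathbf{0},\mathbf{B}]\|_*\le\|\mathbf{A}\|_*+\|\mathbf{B}\|_*$ and computes $|[\mathbf{A},\mathbf{B}]|$ as block-diagonal when $\mathbf{A}^*\mathbf{B}=\mathbf{0}$, while you go through the dual pairing and an explicit SVD. The real divergence is the necessity direction. The paper stays on the \emph{primal} side: it writes the positive square root $|[\mathbf{A},\mathbf{B}]|=\begin{pmatrix}\mathbf{E}&\mathbf{G}\\ \mathbf{G}^*&\mathbf{F}\end{pmatrix}$, reads off the identities $|\mathbf{A}|^2=\mathbf{E}^2+\mathbf{G}\mathbf{G}^*$, $|\mathbf{B}|^2=\mathbf{F}^2+\mathbf{G}^*\mathbf{G}$, $\mathbf{A}^*\mathbf{B}=\mathbf{E}\mathbf{G}+\mathbf{G}\mathbf{F}$, and runs a Cauchy--Schwarz/trace chain through the eigenvectors of $|\mathbf{A}|$ and $|\mathbf{B}|$ to force $\mathbf{G}=\mathbf{0}$. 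You work on the \emph{dual} side: equality forces a single spectral-norm-one witness $[\mathbf{X}_1,\mathbf{X}_2]$ to satisfy $\mathbf{X}_1\mathbf{V}_A=\mathbf{U}_A$ and $\mathbf{X}_2\mathbf{V}_B=\mathbf{U}_B$ simultaneously, hence $\|[\mathbf{U}_A,\mathbf{U}_B]\|_2\le 1$, and the block-Gram spectrum $1\pm\sigma_i(\mathbf{U}_A^*\mathbf{U}_B)$ finishes it. Your approach is more geometric and makes explicit the ``one contraction must realize two polar factors'' obstruction; the paper's is a self-contained algebraic computation that never invokes duality or a maximizer. Both are of comparable length and extend to $K>2$ by the same induction.
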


\begin{proof}
The inequality (\ref{eq:thmA}) and the sufficient condition for the equality to hold is easy to prove. More specifically,
\begin{align}
  \label{eq:A:proof_inequality}
  \|[\mathbf{A},\mathbf{B}]\|_* = \|[\mathbf{A},\mathbf{0}]+[\mathbf{0},\mathbf{B}]\|_* \le \|[\mathbf{A},\mathbf{0}]\|_*+\|[\mathbf{0},\mathbf{B}]\|_* = \|\mathbf{A}\|_*+\|\mathbf{B}\|_*.
\end{align}
Moreover, if $\mathbf{A}^*\mathbf{B}=\mathbf{0}$, then
\begin{align}
  \label{eq:A:proof_sufficient_1}
  \left|[\mathbf{A},\mathbf{B}]\right|^2
  =      \begin{bmatrix}
        \mathbf{A} & \mathbf{B}
      \end{bmatrix}^*
      \begin{bmatrix}
        \mathbf{A} & \mathbf{B}
      \end{bmatrix}
                 =
                 \begin{bmatrix}
                   \mathbf{A}^*\mathbf{A} & \mathbf{A}^*\mathbf{B}\\
                   \mathbf{B}^*\mathbf{A} & \mathbf{B}^*\mathbf{B}
                 \end{bmatrix}=
                 \begin{bmatrix}
                   \mathbf{A}^*\mathbf{A} &  \mathbf{0}\\
                   \mathbf{0} & \mathbf{B}^*\mathbf{B}
                 \end{bmatrix}=
                 \begin{bmatrix}
                   |\mathbf{A}| &  \mathbf{0}\\
                   \mathbf{0} & |\mathbf{B}|
                 \end{bmatrix}^2,
\end{align}
where $|\mathbf{A}| = (\mathbf{A}^*\mathbf{A})^{\frac{1}{2}}$. Therefore,
\begin{align}
  \label{eq:A:proof_sufficient_2}
  \|[\mathbf{A},\mathbf{B}]\|_* = \Tr\left(\left|[\mathbf{A},\mathbf{B}]\right|\right)
  = \Tr\left(
  \begin{bmatrix}
    |\mathbf{A}| & \mathbf{0}\\
    \mathbf{0} & |\mathbf{B}|
  \end{bmatrix}
  \right) = \Tr\left(|\mathbf{A}|\right)+\Tr\left(|\mathbf{B}|\right)
  = \|\mathbf{A}\|_*+\|\mathbf{B}\|_*.
\end{align}

Next, we show the necessary condition for the equality to hold, i.e.,
\begin{align}
  \label{eq:A:proof_necessary_1}
   \|[\mathbf{A},\mathbf{B}]\|_* = \|\mathbf{A}\|_*+\|\mathbf{B}\|_*\implies \mathbf{A}^*\mathbf{B}=\mathbf{0}.
\end{align}
Let $\begin{bmatrix}\mathbf{E} & \mathbf{G}\\ \mathbf{G}^* & \mathbf{F}\end{bmatrix} = \begin{bmatrix}\mathbf{A}^*\mathbf{A} & \mathbf{A}^*\mathbf{B}\\\mathbf{B}^*\mathbf{A} & \mathbf{B}^*\mathbf{B}\end{bmatrix}^{\frac{1}{2}}=\left|[\mathbf{A},\mathbf{B}]\right|$ be a symmetric positive semidefinite matrix. We have
  \begin{align}
    \label{eq:A:proof_necessary_2}
    \left\{
    \begin{aligned}
      |\mathbf{A}|^2 &= \mathbf{A}^*\mathbf{A} = \mathbf{E}^2+\mathbf{GG}^*\\
      |\mathbf{B}|^2 &= \mathbf{B}^*\mathbf{B} = \mathbf{F}^2+\mathbf{G}^*\mathbf{G}\\
      \mathbf{A}^*\mathbf{B} &= \mathbf{EG} + \mathbf{GF}.
    \end{aligned}\right.
  \end{align}
Let $\left\{\bm{a}_i\right\}_{i=1}^m$, $\left\{\bm{b}_i\right\}_{i=1}^n$ be the orthonormal eigenvectors of $|\mathbf{A}|, |\mathbf{B}|$, respectively. Then
\begin{align}
  \label{eq:A:proof_necessary_3}
  \||\mathbf{A}|\bm{a}_i\|^2 = \left<|\mathbf{A}|^2\bm{a}_i,\bm{a}_i\right> = \left<(\mathbf{E}^2+\mathbf{GG}^*)\bm{a}_i,\bm{a}_i\right> = \|\mathbf{E}\bm{a}_i\|^2 + \|\mathbf{G}^*\bm{a}_i\|^2.
\end{align}
Similarly,
\begin{align}
  \label{eq:A:proof_necessary_4}
  \||\mathbf{B}|\bm{b}_i\|^2 =  \|\mathbf{F}\bm{b}_i\|^2 + \|\mathbf{G}\bm{b}_i\|^2.
\end{align}
Suppose that $\|[\mathbf{A},\mathbf{B}]\|_* = \|\mathbf{A}\|_*+\|\mathbf{B}\|_*$, then
\begin{align}
  \nonumber
  \|\mathbf{A}\|_*+\|\mathbf{B}\|_* &= \Tr(|\mathbf{A}|) + \Tr(|\mathbf{B}|) = \sum_{i=1}^m\left<|\mathbf{A}|\bm{a}_i,\bm{a}_i\right> + \sum_{i=1}^n\left<|\mathbf{B}|\bm{b}_i,\bm{b}_i\right>\\\nonumber
  &= \sum_{i=1}^m\||\mathbf{A}|\bm{a}_i\| + \sum_{i=1}^n\||\mathbf{B}|\bm{b}_i\|\\\nonumber
  &=\sum_{i=1}^m\left(\|\mathbf{E}\bm{a}_i\|^2+\|\mathbf{G}^*\bm{a}_i\|^2\right)^{\frac{1}{2}}+\sum_{i=1}^n\left(\|\mathbf{F}\bm{b}_i\|^2+\|\mathbf{G}\bm{b}_i\|^2\right)^{\frac{1}{2}}\\\nonumber
  & \ge \sum_{i=1}^m\|\mathbf{E}\bm{a}_i\| + \sum_{i=1}^n\|\mathbf{F}\bm{b}_i\|
    \ge \sum_{i=1}^m\left<\mathbf{E}\bm{a}_i,\bm{a}_i\right> + \sum_{i=1}^n\left<\mathbf{F}\bm{b}_i,\bm{b}_i\right>\\\nonumber
  & = \Tr(\mathbf{E}) + \Tr(\mathbf{F}) = \Tr(|[\mathbf{A},\mathbf{B}]|) = \|[\mathbf{A},\mathbf{B}]\|_*\\\label{eq:A:proof_necessary_5}
  & = \|\mathbf{A}\|_* + \|\mathbf{B}\|_*
\end{align}
Therefore, both of the inequalities in this chain must be equalities, and the first one being equality only if $\mathbf{G} = \mathbf{0}$. This combined with the last equation in (\ref{eq:A:proof_necessary_2}) implies
\begin{align}
  \label{eq:A:proof_necessary_6}
  \mathbf{A}^*\mathbf{B} = \mathbf{EG} + \mathbf{GF} = \mathbf{0}
\end{align}
\end{proof}

\section{Proof of Theorem~\ref{thm:global-min}}
\begin{proof}
  Recall that $l$ is defined in \eqref{eq:total_loss} as
  \begin{align}
    \label{eq:B:l}
    l(\mathbf{X}, \mathbf{Y}) =   l([\mathbf{X}^g,\mathbf{X}^v],[\mathbf{Y}^g,\mathbf{Y}^v]) = l_g(\mathbf{Z}^g) + \lambda l_v(\mathbf{Y}^v, \hat{\mathbf{Y}}^v).
  \end{align}
The nonnegativity of $l_g(\mathbf{Z}^g)$ is guaranteed by Theorem~\ref{thm:lg}. The validation loss $l_v(\mathbf{Y}^v, \hat{\mathbf{Y}}^v)$ is also nonnegative since it is the average (over the validation batch) of the cross entropy losses:
\begin{align}
  \label{eq:B:validation_loss}
  l_v(\mathbf{Y}^v, \hat{\mathbf{Y}}^v) = \frac{1}{|B_v|}\sum_{\bm{x}\in \mathbf{X}^v}H(\delta_y,\hat{\bm{y}}) = -\frac{1}{|B_v|}\sum_{\bm{x}\in \mathbf{X}^v} \log \hat{y}_y.
\end{align}
Therefore $l = l_g+\lambda l_v$ is also nonnegative.

Next, for a given $\lambda>0$, $l(\mathbf{X},\mathbf{Y})$ obtains its minimum value zero if and only if both $l_g(\mathbf{Z}^g)$ and $l_v(\mathbf{Y}^v, \hat{\mathbf{Y}}^v)$ are zeros.
\begin{itemize}
\item By Theorem~\ref{thm:lg}, $l_g(\mathbf{Z}^g)= 0$ if and only if $\Span(\mathbf{Z}_c^g)\bot \Span(\mathbf{Z}_{c'}^g), \forall c\not=c'$.
\item According to (\ref{eq:B:validation_loss}), $l_v(\mathbf{Y}^v, \hat{\mathbf{Y}}^v) = 0$ if and only if $\hat{\bm{y}}(\bm{x}) = \delta_y, \forall \bm{x} \in \mathbf{X}^v$, i.e., for every  $\bm{x}\in \mathbf{X}_c^v$, its feature $\bm{z} = \Phi(\bm{x};\theta)$ belongs to $\Span(\mathbf{Z}_c^g)$.
\end{itemize}

At last, we want to prove that if $\lambda >0$, and $\mathbf{X}^v$ contains at least one  sample for each class, then  $\rank(\Span(\mathbf{Z}_c^g)) \ge 1$ for any $c\in \{1,\ldots, K\}$.

If not, then there exists $c\in \{1,\ldots, K\}$ such that $\rank(\Span(\mathbf{Z}_c^g))  = 0$. Let $\bm{x}\in \mathbf{X}^v$ be a validation datum belonging to class $y=c$. Recall that the predicted probability of $\bm{x}$ belonging to class $c$ is defined in \eqref{eq:probability} as
\begin{align}
  \label{eq:B:probability}
  \hat{y}_c=\mathbf{P}(\bm{x}\in c) \triangleq \left<\bm{z}, \frac{\text{proj}_c(\bm{z})}{\max\left(\|\text{proj}_c(\bm{z})\|,\varepsilon\right)}\right>\left/\sum_{c'=1}^K\left<\bm{z}, \frac{\text{proj}_{c'}(\bm{z})}{\max\left(\|\text{proj}_{c'}(\bm{z})\|, \varepsilon\right)}\right>\right.=0.
\end{align}
Thus we have
\begin{align}
  \label{eq:B:final}
  l\ge \lambda l_v = -\frac{\lambda}{|B_v|}\sum_{\bm{x}\in \mathbf{X}^v} \log \hat{y}_y \ge -\frac{\lambda}{|B_v|}\log \hat{y}(\bm{x})_c = +\infty
\end{align}
\end{proof}

\section{Proof of Theorem 3}
First, we need the following lemma.
\begin{lemma}
  \label{lemma:1}
  Let $\mathbf{A}\in\R^{m\times n}$ be a rank-$s$ matrix, and let $\mathbf{A}=\mathbf{U}^{(1)}\mathbf{\Sigma}^{(1)}\mathbf{V}^{(1)*}$ be the compact SVD of $\mathbf{A}$, i.e., $\mathbf{U}^{(1)}\in \R^{m\times s}, \mathbf{\Sigma}^{(1)}\in\R^{s\times s}, \mathbf{V}^{(1)}\in \R^{n\times s}$, then the subdifferential of the nuclear norm at $\mathbf{A}$ is:
  \begin{align}
    \label{eq:C:subdifferential}
    \partial\|\mathbf{A}\|_* = \left\{\mathbf{U}^{(1)}\mathbf{V}^{(1)*}+\tilde{\mathbf{U}}^{(2)}\tilde{\mathbf{W}}\tilde{\mathbf{V}}^{(2)*}\right\},
  \end{align}
where $\tilde{\mathbf{U}}^{(2)}\in\R^{m\times (n-s)}$, $\tilde{\mathbf{V}}^{(2)}\in\R^{n\times (n-s)}$, $\tilde{\mathbf{W}}\in\R^{(n-s)\times (n-s)}$ satisfy that the columns of $\tilde{\mathbf{U}}^{(2)}$ and $\tilde{\mathbf{V}}^{(2)}$ are orthonormal, $\Span(\mathbf{U}^{(1)})\bot \Span(\tilde{\mathbf{U}}^{(2)})$, $\Span(\mathbf{V}^{(1)})\bot \Span(\tilde{\mathbf{V}}^{(2)})$, and  $\|\tilde{\mathbf{W}}\|_2\le 1$.
\end{lemma}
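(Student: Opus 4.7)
My plan is to invoke the standard duality characterization
$$\partial\|\mathbf{A}\|_* = \bigl\{\mathbf{Y} : \|\mathbf{Y}\|_2 \le 1,\ \langle \mathbf{Y},\mathbf{A}\rangle = \|\mathbf{A}\|_*\bigr\},$$
which holds because the spectral norm $\|\cdot\|_2$ is dual to the nuclear norm $\|\cdot\|_*$ under the Frobenius inner product. The lemma then reduces to checking that these two conditions on $\mathbf{Y}$ are equivalent to the stated parametrization.

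For the inclusion $\supseteq$, I would plug in an arbitrary $\mathbf{Y}$ of the given form and verify both conditions directly. Using $\mathbf{V}^{(1)*}\tilde{\mathbf{V}}^{(2)}=\mathbf{0}$ and $\mathbf{U}^{(1)*}\tilde{\mathbf{U}}^{(2)}=\mathbf{0}$, a short computation yields $\mathbf{Y}\mathbf{Y}^* = \mathbf{U}^{(1)}\mathbf{U}^{(1)*} + \tilde{\mathbf{U}}^{(2)}\tilde{\mathbf{W}}\tilde{\mathbf{W}}^*\tilde{\mathbf{U}}^{(2)*}$, whose two summands are supported on mutually orthogonal subspaces of $\R^m$; hence the eigenvalues of $\mathbf{Y}\mathbf{Y}^*$ are the union of those of the two summands, giving $\|\mathbf{Y}\|_2 = \max(1,\|\tilde{\mathbf{W}}\|_2)\le 1$. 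A parallel use of $\tilde{\mathbf{U}}^{(2)*}\mathbf{U}^{(1)}=\mathbf{0}$ collapses the cross term in $\langle\mathbf{Y},\mathbf{A}\rangle$, leaving $\Tr(\mathbf{\Sigma}^{(1)})=\|\mathbf{A}\|_*$.

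For the harder inclusion $\subseteq$, I would complete $\mathbf{U}^{(1)}, \mathbf{V}^{(1)}$ to full orthonormal bases $\mathbf{U}=[\mathbf{U}^{(1)},\mathbf{U}^{(2)}]$ and $\mathbf{V}=[\mathbf{V}^{(1)},\mathbf{V}^{(2)}]$, then analyze $\tilde{\mathbf{Y}} := \mathbf{U}^*\mathbf{Y}\mathbf{V}$ as a $2\times 2$ block matrix. The inner-product condition becomes $\sum_{i=1}^s \sigma_i(\tilde{\mathbf{Y}}_{11})_{ii} = \sum_{i=1}^s \sigma_i$, while $\|\mathbf{Y}\|_2\le 1$ forces $|(\tilde{\mathbf{Y}}_{11})_{ii}|\le 1$; since every $\sigma_i>0$, this forces $(\tilde{\mathbf{Y}}_{11})_{ii}=1$ for all $i$, and then the column-norm bound $\|\tilde{\mathbf{Y}}_{11}\mathbf{e}_i\|_2\le 1$ pins down $\tilde{\mathbf{Y}}_{11}=\mathbf{I}_s$. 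The remaining off-diagonal blocks are killed by the submatrix identity $\|[\mathbf{I}_s,\tilde{\mathbf{Y}}_{12}]\|_2^2 = 1+\|\tilde{\mathbf{Y}}_{12}\|_2^2$, which combined with $\|\tilde{\mathbf{Y}}\|_2\le 1$ gives $\tilde{\mathbf{Y}}_{12}=\mathbf{0}$; symmetrically $\tilde{\mathbf{Y}}_{21}=\mathbf{0}$. The lower-right block $\tilde{\mathbf{Y}}_{22}$ is constrained only by $\|\tilde{\mathbf{Y}}_{22}\|_2\le 1$, and reconstructing in the original coordinates produces $\mathbf{Y} = \mathbf{U}^{(1)}\mathbf{V}^{(1)*} + \mathbf{U}^{(2)}\tilde{\mathbf{Y}}_{22}\mathbf{V}^{(2)*}$, i.e., the claimed form after identifying $\tilde{\mathbf{U}}^{(2)}=\mathbf{U}^{(2)}$, $\tilde{\mathbf{V}}^{(2)}=\mathbf{V}^{(2)}$, $\tilde{\mathbf{W}}=\tilde{\mathbf{Y}}_{22}$ (under the convention $m\ge n$, so that the stated dimensions $n-s$ are achievable).

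The main obstacle is the rigidification of $\tilde{\mathbf{Y}}_{11}$: the trace condition alone only fixes a weighted sum of diagonal entries, so one has to combine it carefully with the entrywise and column-norm consequences of $\|\mathbf{Y}\|_2\le 1$, using positivity of the $\sigma_i$, to conclude $\tilde{\mathbf{Y}}_{11}=\mathbf{I}_s$. Once this step is in place, the block-structure argument extracting the parametrization is essentially mechanical.
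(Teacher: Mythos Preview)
Your argument is correct in substance but takes a genuinely different route from the paper. The paper does \emph{not} re-derive the subdifferential from duality; instead it quotes the standard characterization \eqref{eq:subdifferential}, in which the orthogonal complements $\mathbf{U}^{(2)}\in\R^{m\times(m-s)}$ and $\mathbf{V}^{(2)}\in\R^{n\times(n-s)}$ are \emph{fixed} by the full SVD and only $\mathbf{W}\in\R^{(m-s)\times(n-s)}$ varies. The lemma's content, for the paper, is purely the equivalence between this fixed-complement description $D_1$ and the variable-complement description $D_2$ with $\tilde{\mathbf{U}}^{(2)}\in\R^{m\times(n-s)}$, $\tilde{\mathbf{W}}\in\R^{(n-s)\times(n-s)}$. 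Its proof is two short change-of-basis moves: for $D_1\subseteq D_2$, take the reduced SVD of $\mathbf{U}^{(2)}\mathbf{W}$ to compress the $m-s$ columns down to $n-s$; for $D_2\subseteq D_1$, write $\tilde{\mathbf{U}}^{(2)}=\mathbf{U}^{(2)}\mathbf{P}$, $\tilde{\mathbf{V}}^{(2)}=\mathbf{V}^{(2)}\mathbf{Q}$ and absorb $\mathbf{P},\mathbf{Q}$ into $\mathbf{W}$. Your approach, by contrast, is self-contained: you never invoke \eqref{eq:subdifferential} but rebuild it from the duality description $\partial\|\mathbf{A}\|_*=\{\mathbf{Y}:\|\mathbf{Y}\|_2\le 1,\ \langle\mathbf{Y},\mathbf{A}\rangle=\|\mathbf{A}\|_*\}$ via the block-rigidity argument. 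This buys independence from the cited result at the cost of a longer calculation.

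One small gap remains in your final identification. When you complete to a full basis you get $\mathbf{U}^{(2)}\in\R^{m\times(m-s)}$ and $\tilde{\mathbf{Y}}_{22}\in\R^{(m-s)\times(n-s)}$, so setting $\tilde{\mathbf{U}}^{(2)}=\mathbf{U}^{(2)}$ and $\tilde{\mathbf{W}}=\tilde{\mathbf{Y}}_{22}$ does \emph{not} match the lemma's stated sizes $\tilde{\mathbf{U}}^{(2)}\in\R^{m\times(n-s)}$, $\tilde{\mathbf{W}}\in\R^{(n-s)\times(n-s)}$ unless $m=n$; the convention $m\ge n$ alone does not fix this. What you have actually recovered is the $D_1$ form \eqref{eq:subdifferential}. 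To land in $D_2$ you still need exactly the paper's compression step: take a reduced SVD $\tilde{\mathbf{Y}}_{22}=\bar{\mathbf{U}}\bar{\mathbf{\Sigma}}\bar{\mathbf{V}}^*$ with $\bar{\mathbf{U}}\in\R^{(m-s)\times(n-s)}$ and set $\tilde{\mathbf{U}}^{(2)}=\mathbf{U}^{(2)}\bar{\mathbf{U}}$, $\tilde{\mathbf{W}}=\bar{\mathbf{\Sigma}}\bar{\mathbf{V}}^*$, $\tilde{\mathbf{V}}^{(2)}=\mathbf{V}^{(2)}$. With that one extra line your proof is complete.
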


\begin{proof}
  Based on the characterization of the subdifferential in \eqref{eq:subdifferential}, we only need to show  the following two sets are identical:
\begin{align}
  D_1 &= \left\{\mathbf{U}^{(1)}\mathbf{V}^{(1)*}+\mathbf{U}^{(2)}\mathbf{W}\mathbf{V}^{(2)*}, \quad \forall \mathbf{W}\in\R^{(m-s)\times (n-s)}\text{ with } \|\mathbf{W}\|_2\le 1\right\}\\
  D_2 &= \left\{\mathbf{U}^{(1)}\mathbf{V}^{(1)*}+\tilde{\mathbf{U}}^{(2)}\tilde{\mathbf{W}}\tilde{\mathbf{V}}^{(2)*}, \quad \tilde{\mathbf{U}}^{(2)}, \tilde{\mathbf{V}}^{(2)}, \tilde{\mathbf{W}} \text{ satisfy the conditions in the lemma}\right\}
\end{align}

On one hand, let $\bm{d} = \mathbf{U}^{(1)}\mathbf{V}^{(1)*}+\mathbf{U}^{(2)}\mathbf{W}\mathbf{V}^{(2)*} \in D_1$, and let $\mathbf{U}^{(2)}\mathbf{W} = \bar{\mathbf{U}}\bar{\mathbf{\Sigma}}\bar{\mathbf{V}}^*$ be the reduced SVD of $\mathbf{U}^{(2)}\mathbf{W} \in \R^{m\times (n-s)}$, i.e., $\bar{\mathbf{U}}\in\R^{m\times (n-s)}, \bar{\mathbf{\Sigma}}\in\R^{(n-s)\times (n-s)}, \bar{\mathbf{V}}\in\R^{(n-s)\times (n-s)}$. Then we can set  $\tilde{\mathbf{U}}^{(2)}=\bar{\mathbf{U}}$, $\tilde{\mathbf{W}} = \bar{\mathbf{\Sigma}}\bar{\mathbf{V}}^*$, and $\tilde{\mathbf{V}}^{(2)} = \mathbf{V}^{(2)}$. It is easy to check that $\tilde{\mathbf{U}}^{(2)}, \tilde{\mathbf{V}}^{(2)}, \tilde{\mathbf{W}}$ satisfy the conditions in the lemma, and
\begin{align}
  \bm{d} = \mathbf{U}^{(1)}\mathbf{V}^{(1)*}+\tilde{\mathbf{U}}^{(2)}\tilde{\mathbf{W}}\tilde{\mathbf{V}}^{(2)*} \in D_2
\end{align}

On the other hand, let $\bm{d} = \mathbf{U}^{(1)}\mathbf{V}^{(1)*}+\tilde{\mathbf{U}}^{(2)}\tilde{\mathbf{W}}\tilde{\mathbf{V}}^{(2)*} \in D_2$, where $\tilde{\mathbf{U}}^{(2)},\tilde{\mathbf{V}}^{(2)}, \tilde{\mathbf{W}}$ satisfy the conditions in the lemma. Let $\tilde{\mathbf{U}}^{(2)}= \mathbf{U}^{(2)}\mathbf{P}$ and $\tilde{\mathbf{V}}^{(2)}= \mathbf{V}^{(2)}\mathbf{Q}$, where $\mathbf{P}\in\R^{(m-s)\times (n-s)}$ and $\mathbf{Q}\in\R^{(n-s)\times (n-s)}$ have orthonormal columns. After setting $\mathbf{W} = \mathbf{P}\tilde{\mathbf{W}}\mathbf{Q}^*$, we have
\begin{align}
  \tilde{\mathbf{U}}^{(2)}\tilde{\mathbf{W}}\tilde{\mathbf{V}}^{(2)*} = \mathbf{U}^{(2)}\mathbf{P}\tilde{\mathbf{W}}\mathbf{Q}^*\mathbf{V}^{(2)*} = \mathbf{U}^{(2)}\mathbf{W}\mathbf{V}^{(2)*},
\end{align}
where $\|\mathbf{W}\|_2\le 1$. Therefore,
\begin{align}
  \bm{d} = \mathbf{U}^{(1)}\mathbf{V}^{(1)*}+\tilde{\mathbf{U}}^{(2)}\tilde{\mathbf{W}}\tilde{\mathbf{V}}^{(2)*} = \mathbf{U}^{(1)}\mathbf{V}^{(1)*}+\mathbf{U}^{(2)}\mathbf{W}\mathbf{V}^{(2)*} \in D_1
\end{align}
\end{proof}

Now we go on to prove Theorem~\ref{thm:stability}.
\begin{proof}
  Let $\rank(\mathbf{A}) = s$, and we split the computed singular vectors into two parts: $\mathbf{U}^\varepsilon = [\mathbf{U}^{(1)\varepsilon},\mathbf{U}^{(2)\varepsilon}]$, $\mathbf{V}^\varepsilon = [\mathbf{V}^{(1)\varepsilon},\mathbf{V}^{(2)\varepsilon}]$, where $\mathbf{U}^{(1)\varepsilon}\in\R^{m\times s}$, $\mathbf{U}^{(2)\varepsilon}\in\R^{m\times (n-s)}$,$\mathbf{V}^{(1)\varepsilon}\in\R^{n\times s}$, and $\mathbf{V}^{(2)\varepsilon}\in\R^{n\times (n-s)}$. By the backward stability of SVD, we have $\|\mathbf{U}^{(1)}-\mathbf{U}^{(1)\varepsilon}\|_2=O(\varepsilon/\eta)$, $\|\mathbf{V}^{(1)}-\mathbf{V}^{(1)\varepsilon}\|_2=O(\varepsilon/\eta)$, and there exists $\tilde{\mathbf{U}}^{(2)}, \tilde{\mathbf{V}}^{(2)}$ satisfying the condition in Lemma~\ref{lemma:1} and $\|\tilde{\mathbf{U}}^{(2)}-\mathbf{U}^{(2)\varepsilon}\|_2=O(\varepsilon/\eta)$, $\|\tilde{\mathbf{V}}^{(2)}-\mathbf{V}^{(2)\varepsilon}\|_2=O(\varepsilon/\eta)$.

Because of Lemma~\ref{lemma:1}, we have $(\mathbf{U}^{(1)}\mathbf{V}^{(1)*}+\tilde{\mathbf{U}}^{(2)}\tilde{\mathbf{V}}^{(2)*})\in\partial\|\mathbf{A}\|_*$, and
\begin{align}
  \nonumber
  \dis(\mathbf{U}^\varepsilon\mathbf{V}^{\varepsilon *},\partial\|\mathbf{A}\|_*)  \le &\quad \|\mathbf{U}^\varepsilon\mathbf{V}^{\varepsilon *}-\left(\mathbf{U}^{(1)}\mathbf{V}^{(1)*}+\tilde{\mathbf{U}}^{(2)}\tilde{\mathbf{V}}^{(2)*}\right)\|_2\\ \nonumber
  =&\quad\|\left(\mathbf{U}^{(1)\varepsilon}\mathbf{V}^{(1)\varepsilon*}+\mathbf{U}^{(2)\varepsilon}\mathbf{V}^{(2)\varepsilon*}\right)-\left(\mathbf{U}^{(1)}\mathbf{V}^{(1)*}+\tilde{\mathbf{U}}^{(2)}\tilde{\mathbf{V}}^{(2)*}\right)\|_2\\ \nonumber
  \le & \quad\|\left(\mathbf{U}^{(1)\varepsilon}-\mathbf{U}^{(1)}\right)\mathbf{V}^{(1)\varepsilon*}\|_2+\|\mathbf{U}^{(1)}\left(\mathbf{V}^{(1)\varepsilon *}-\mathbf{V}^{(1) *}\right)\|_2\\ \nonumber
  &+ \|\left(\mathbf{U}^{(2)\varepsilon}-\tilde{\mathbf{U}}^{(2)}\right)\mathbf{V}^{(2)\varepsilon*}\|_2+\|\tilde{\mathbf{U}}^{(2)}\left(\mathbf{V}^{(2)\varepsilon *}-\tilde{\mathbf{V}}^{(2) *}\right)\|_2\\   \label{eq:C:bound}
  = &\quad O(\varepsilon/\eta)
\end{align}
\end{proof}

\end{document}